\documentclass{article}
 
\usepackage{arxiv}

\usepackage[utf8]{inputenc} 
\usepackage[T1]{fontenc}    
\usepackage{url}            
\usepackage{booktabs}       
\usepackage{amsfonts}       
\usepackage{nicefrac}       
\usepackage{microtype}      
\usepackage{graphicx}
\usepackage{natbib}         
\usepackage{hyperref}       
\usepackage{doi}

\usepackage{amsmath,amssymb,amsthm}  
\usepackage{mathtools}               
\usepackage{bm}                      
\usepackage{multirow}                
\usepackage{subcaption}              
\usepackage{algorithm}               
\usepackage{algorithmic}
\usepackage{pifont, xcolor}
\usepackage{makecell}
\usepackage{tikz}

\usetikzlibrary{positioning,calc,arrows.meta}

\renewcommand{\arraystretch}{1.15}
\newcommand{\fitwidth}[1]{%
  \resizebox{\ifdim\width>\linewidth\linewidth\else\width\fi}{!}{#1}%
}

\hypersetup{
  colorlinks   = true,
  linkcolor    = [rgb]{0.10,0.20,0.55},
  citecolor    = [rgb]{0.10,0.35,0.20},
  urlcolor     = [rgb]{0.55,0.15,0.15},
  breaklinks   = true,
  bookmarksnumbered = true,
}

\providecommand{\cmark}{\textcolor{teal}{\ding{51}}}
\providecommand{\xmark}{\textcolor{red!70!black}{\ding{55}}}
\providecommand{\gap}[3]{\makecell[c]{$#1$\\[-1.5pt]{\scriptsize$[\,#2,\ #3\,]$}}}

\providecommand{\cmark}{\textcolor{teal}{\ding{51}}}
\providecommand{\xmark}{\textcolor{red!70!black}{\ding{55}}}

\theoremstyle{plain}

\newtheorem{proposition}{Proposition}

\theoremstyle{definition}

\theoremstyle{remark}

\newcommand{\comp}{\mathsf{C}}                       
\newcommand{\units}{N}                               
\newcommand{\dep}{D}                                 
\newcommand{\hier}{H}                                
\newcommand{\game}{G}                                
\newcommand{\feas}{\mathcal{F}}                      
\newcommand{\tasks}{\mathcal{T}}                     
\newcommand{\score}{\operatorname{score}}
\newcommand{\agentop}{\operatorname{Agent}}
\newcommand{\rhodel}{\rho_{\mathrm{del}}}
\newcommand{\rhopad}{\rho_{\mathrm{pad}}}
\newcommand{\val}{V}                                 
\newcommand{\prefixset}[2]{S_{#1}^{#2}}              
\newcommand{\E}{\mathbb{E}}

\title{What Is a Skill Worth? Structure-Aware Shapley Valuation of Agent Skills}

\author{%
  Tao Li$^{1}$\hspace{0.55em}
  Junfeng Liu$^{2}$\thanks{Corresponding author: \texttt{liujf@pcl.ac.cn}}\hspace{0.55em}
  Qinghua Zhao$^{3}$\hspace{0.55em}
  Yifan Li$^{2}$\hspace{0.55em}
  Lei Wang$^{2}$\hspace{0.55em}
  Bo Shao$^{4}$\hspace{0.55em}
  Xuejun Liu$^{1}$\hspace{0.55em}
  Linjun Shou$^{4}$ \\[7pt]
  {\normalfont\normalsize
   $^{1}$Nanjing University of Aeronautics and Astronautics \quad
   $^{2}$Pengcheng Laboratory} \\[2pt]
  {\normalfont\normalsize
   $^{3}$Hefei University \quad
   $^{4}$Microsoft}
}

\date{}

\renewcommand{\shorttitle}{What Is a Skill Worth? Structure-Aware Shapley Valuation of Agent Skills}

\hypersetup{
pdftitle={What Is a Skill Worth? Structure-Aware Shapley Valuation of Agent Skills},
pdfsubject={cs.AI, cs.CL, cs.LG},
pdfauthor={Tao Li, Junfeng Liu, Qinghua Zhao, Yifan Li, Lei Wang, Bo Shao, Xuejun Liu, Linjun Shou},
pdfkeywords={agent skills, Shapley value, data valuation, credit assignment, LLM agents, prompt compression},
}

\begin{document}
\maketitle

\begin{abstract}
	Agent skills are increasingly optimized by automated feedback loops, producing
long structured artifacts whose internal value remains unclear. 
 We study \emph{skill valuation}: assigning credit to the internal units of a
  fixed skill, such as rules, examples, scripts, and heuristics, under a fixed
  agent and held-out task distribution. Skill valuation differs from data or
  prompt-span valuation because skill units are structured: they may depend on
  other units, belong to a document hierarchy, trigger agent behavior, and consume
  limited prompt context.
    We introduce \textsc{SkillSV}, a structure-aware Shapley-style framework for
  skill valuation. \textsc{SkillSV} compiles a skill into units, dependencies, and
  hierarchy, so that only valid counterfactual skills are evaluated. It uses
  paired deletion and length-neutral padding to separate content value from
  context cost, and estimates the resulting values with a rollout-budgeted
  estimator for noisy agent evaluations. 
  On four agentic benchmarks, we assess the \emph{faithfulness},
  \emph{actionability}, and \emph{explanation} of \textsc{SkillSV}: it
  recovers unit interactions, preserves aggregate skill lift, and guides safe
  pruning and compression.
\end{abstract}

\keywords{Agent skills \and Shapley value \and Credit assignment \and LLM agents \and Prompt compression}

\section{Introduction}
\label{sec:intro}

Agent skills are reusable artifacts that turn LLM reasoning into concrete procedures~\cite{ding2026agent}. Automated optimizers increasingly write them, proposing and accepting edits under execution feedback~\cite{agrawal2026gepa,skillopt,trace2skill,yuksekgonul2024textgrad,yang2026autoskill,alzubi2026evoskill}.
These loops raise aggregate scores, but return one long skill whose credit
assignment is a black box. What is each piece of a skill worth?
 
We focus on intra-skill valuation, which attributes a skill's performance to its internal unit (e.g., a rule or a script).  This fine-grained perspective is crucial because skills are inherently optimized, debugged, and evolved at the granularity of editable units~\cite{ren2026self}. 
To make skill value well-defined, we fix three ingredients: a skill artifact under evaluation, a target agent, and a held-out task distribution with a metric.

\begin{figure}[tb!]
\centering
\includegraphics[width=\textwidth]{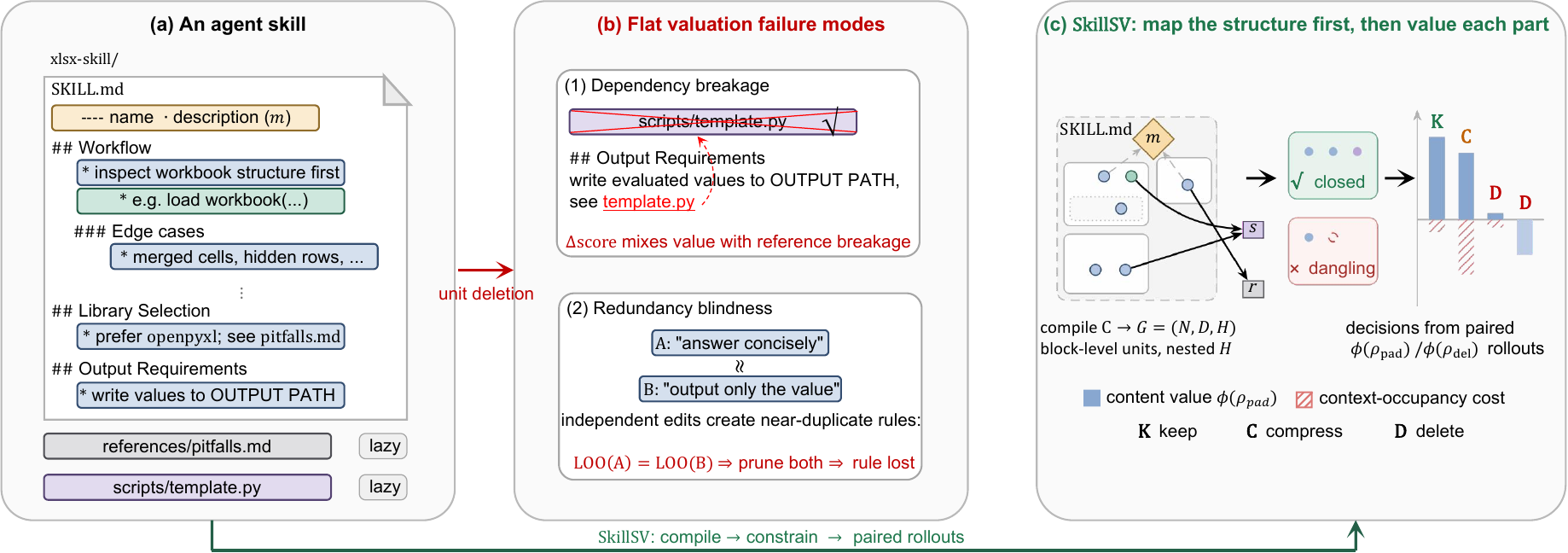}%
\caption{Why flat valuation misvalues agent skills, and how \textsc{SkillSV} fixes it.}
\label{fig:teaser}
\end{figure}

While existing literature successfully assigns credit to training data~\cite{ghorbani2019datashapley, wang2023databanzhaf,maleki2013bounding, lin2025lead} or attributes contributions to specific model parameters~\cite{ghorbani2020neuron,liu2023promptvaluation, mohammadi2024explaining,xu2026model,huang2026discovering}, these paradigms do not seamlessly transfer to agent skills.
A skill is not a singular data point or prompt, but a structured artifact that may combine Markdown hierarchy, triggers,  rules, examples, scripts, and auxiliary files.

This mismatch creates two challenges (Fig.~\ref{fig:teaser}).

\textbf{Challenge I: structural counterfactuals.}
A coalition of skill units is not automatically a valid skill. Removing one unit
may leave a dangling reference, an undefined symbol, a missing script, or a
section fragment detached from its lead-in. Scoring such broken artifacts would
measure the agent's robustness to malformed skills, not the value of the unit
removed. Skill valuation therefore needs a way to compile the artifact into
units, dependencies, and hierarchy, and to evaluate only counterfactuals that
still read like real skills.

\textbf{Challenge II: budgeted rollout estimation.}
Even valid counterfactuals are expensive to score. Each coalition must be
rendered as a skill, given to the agent, run on held-out tasks, and verified.
The estimates are noisy because task difficulty dominates many one-unit
differences. In addition, deleting a unit changes both the information available
to the agent and the prompt length, so a score change may reflect content,
context occupancy, or both.

\emph{Why existing recipes fail.}
These two challenges explain why flat ablation-style attribution is misleading for skills. The same issue also affects flat Shapley-style valuation: averaging
over arbitrary subsets gives credit for contexts that are not valid skill
artifacts~\cite{ghorbani2019datashapley,kwon2022beta}. Leave-one-out (LOO) is the simplest failure case~\cite{cook1977detection}.
If a surviving unit depends on a removed one, LOO measures breakage rather than
contribution. 

We propose \textsc{SkillSV}, a structure-aware valuation framework for agent 
skills. It compiles a skill into units, dependencies, and hierarchy; defines
Shapley-style unit values over feasible insertion orders; and estimates them
with paired deletion/padding rollouts under a fixed budget. Its output supports editing decisions:
  keep, compress, or delete.

\noindent\textbf{Contributions.}
(i) We formulate intra-skill valuation as a structure-constrained cooperative
game, identifying structural counterfactuals and budgeted rollout estimation as
the two challenges that distinguish skills from flat data valuation.
(ii) We introduce a deterministic compiler, paired deletion/padding
counterfactuals, and a chain-coupled estimator for noisy agent rollouts.
(iii) We show that \textsc{SkillSV} recovers interaction-sensitive unit values,
satisfies value closure, and guides safe pruning and compression with negligible performance loss across four benchmarks.

\section{Related Work}
\label{sec:related}
We first introduce key concepts foundational to our skill valuation. Appendix~\ref{app:related} provides an extended discussion and Appendix~\ref{app:notation} summarizes notation.

\subsection{Agent Skills \&  Automated Skill Optimization} 
Agent skills are reusable artifacts that store task-solving behavior, e.g., instructions, tool-use procedures, examples, scripts, and domain heuristics. They let language agents reuse prior experience across tasks instead of solving each instance from scratch~\cite{ding2026agent,ren2026self}.

Recent work has moved from manually authored skills toward automated skill generation and refinement. These methods optimize skills through execution  feedback~\cite{yuksekgonul2024textgrad,agrawal2026gepa,skillopt}, trajectory distillation~\cite{trace2skill,yang2026autoskill,zhou2026memento}, reinforcement learning~\cite{shi2026skill1,vishe2026skillr1}, and evolutionary search~\cite{alzubi2026evoskill,zhang2026coevoskills,ma2026skillclaw, ma2026skillgen}. Across these lines, the common goal is to improve the performance of the resulting skill as a whole. 
While existing methods improve overall skill performance, they rarely attribute these gains to specific internal components.

\subsection{Valuation via Cooperative Games}
The Shapley value offers a principled way to assign credit when several
components jointly determine a payoff. A (transferable-utility) cooperative game
is a pair $(\units,\val)$ with player set $\units=\{1,\dots,n\}$ and
characteristic function $\val:2^{\units}\!\to\!\mathbb{R}$, where $\val(S)$ is the
payoff a coalition $S\subseteq\units$ can jointly secure. The \emph{Shapley
value}~\citep{shapley1953value} is the unique allocation satisfying efficiency,
symmetry, linearity, and the null-player axiom; in permutation form,
\begin{equation}
\phi_i \;=\; \frac{1}{|\Pi(\units)|}\sum_{\pi\in\Pi(\units)}
\big[\val(\prefixset{i}{\pi}\cup\{i\})-\val(\prefixset{i}{\pi})\big],
\label{eq:shapley}
\end{equation}
where $\Pi(\units)$ is the set of $n!$ orderings of $\units$ and
$\prefixset{i}{\pi}=\{\,j\in\units:\pi(j)<\pi(i)\,\}$ collects players
preceding $i$ in $\pi$. Each summand is the \emph{marginal contribution} of $i$
to the coalition it joins, so averaging over all orderings allocates credit
fairly under interaction: redundant players split the value they duplicate, while
complementary players are rewarded for the synergy they unlock. Replacing the
uniform average over $\Pi(\units)$ with an arbitrary distribution over orderings
yields Weber's \emph{probabilistic values}~\citep{weber1988probabilistic}, which
we later specialize to structure-respecting skill orderings.

\subsection{Shapley-Based Valuation in ML}
Shapley-based valuation has been widely applied in machine learning, either to
internal components or to training data. Component-level methods score neurons~\cite{ghorbani2020neuron},
prompts~\cite{liu2023promptvaluation}, model decisions~\cite{mohammadi2024explaining},
parameters~\cite{xu2026model}, and mixture-of-experts experts~\cite{huang2026discovering},
while data-level methods estimate the worth of individual training examples~\cite{ghorbani2019datashapley,kwon2022beta,wang2023databanzhaf,maleki2013bounding}.
Across both lines, the units being valued are flat and unstructured, so uniform
sampling over coalitions suffices. In contrast, skill components are
interdependent, which calls for structure-respecting orderings rather than
uniform ones.

\section{Method}
\label{sec:method}

\subsection{Skill Compilation}
\label{sec:compilation}
Addressing \emph{structural counterfactuals}, we build counterfactual skills by compiling the skill artifact into a structured game.
The compiler $\comp$ maps a skill to a triple $\game=(\units,\dep,\hier)$ with no model in the loop, where $\units$ are valuation units, $\dep$ records dependency constraints among units, and $\hier$ records the document hierarchy used to constrain insertion orders.

\paragraph{Units $\units$.}
We cut valuation units at the granularity at which skill optimizers edit the artifact~\citep{skillopt,agrawal2026gepa}. In the Markdown body, each top-level list item opens a unit (e.g., $n_1$ in Fig.~\ref{fig:skillgraph}),  while indented continuations stay with their parent. Headings are structural scaffolding rather than players: a heading is rendered whenever any unit under it survives. Auxiliary files, such as scripts and references, are compiled as resource units (shown in Fig.~\ref{fig:skillgraph}).

\paragraph{Dependencies $\dep$.}
Dependencies implement the structural constraint introduced in Sec.~\ref{sec:intro}: a coalition is evaluated only if the units it keeps still form a skill that reads like a valid artifact. An edge of $\dep$ points from a unit to a unit it needs; feasible coalitions must therefore be downward closed under these edges. We extract nine families of edges.
Three representative ones are as follows. \emph{Reference edges} cover the syntactically visible cases of markdown links, file-path tokens, and verbatim heading mentions e.g., $n_5 \rightarrow r$ in Fig.~\ref{fig:skillgraph}. \emph{Semantic edges} bind a symbol's definition to its use. They carry no syntactic marker and are recovered by analysing the code blocks themselves~e.g., $n_4 \rightarrow s$.  \emph{Continuity edges} bind fragments that a patch-editing history left adjacent, such as an orphaned table body and the unit carrying its header. e.g., $\{n_1 \rightarrow m, n_4 \rightarrow m\}$.
 
\begin{figure}[t]
  \centering
  \begin{minipage}[t]{0.49\textwidth}
    \centering
    \includegraphics[width=\linewidth]{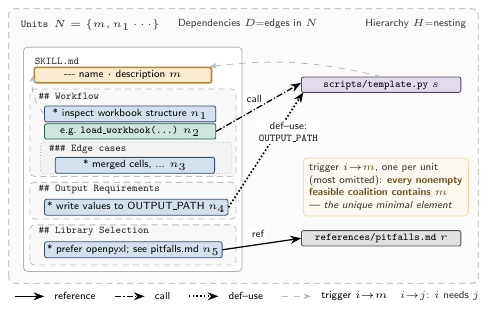}
    \caption{Compiled skill graph $\game=(\units,\dep,\hier)$.}
    \label{fig:skillgraph}
  \end{minipage}\hfill
  \begin{minipage}[t]{0.49\textwidth}
    \centering
    \includegraphics[width=\linewidth]{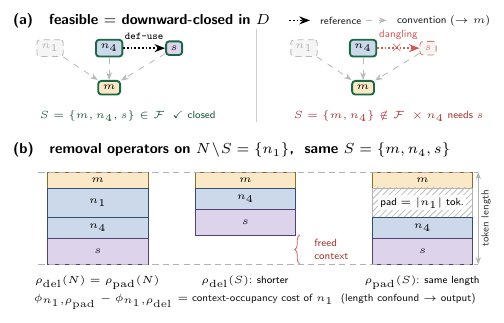}
    \caption{Feasible prefixes and removal operators.}
    \label{fig:operator}
  \end{minipage}
\end{figure}

\paragraph{Hierarchy $\hier$.}
The hierarchy $\hier$ represents the document tree (from directories down to valuation units). While dependencies $\dep$ dictate \emph{which} unit coalitions are valid, $\hier$ constrains the \emph{order} in which they are evaluated. Relying solely on $\dep$ allows unnatural, interleaved evaluation orders (e.g., $m, n_1, n_4, n_2, n_3$), where a unit might be judged against fragmented, partially opened sections. To prevent this, $\hier$ enforces \emph{contiguity}: sibling units must be grouped and evaluated together as a block (e.g., $m, [n_1, n_2, n_3], n_4$). This guarantees that units are evaluated against complete, intact sections.
Because dependencies cannot enforce contiguity, $\hier$ restricts evaluation permutations \citep{winter1989value}, while $\comp$ repairs structural conflicts between $\hier$ and $\dep$ during sampling.
   
\paragraph{Feasibility $\feas$.}
Given $\units$ and $\dep$, $\feas$ is the family of coalitions we allow the renderer to evaluate. A subset $S\subseteq\units$ is feasible if it is downward  closed under dependencies: whenever $u\in S$ and $u\to v\in\dep$, then $v\in S$ (in~Fig~\ref{fig:operator}(a)). Thus a kept unit must also keep the units it needs. The purpose is distributional, not only syntactic. Feasible coalitions should  still read like real skills: no removed scripts referenced by surviving text, no  undefined symbols, and no fragments detached from their lead-in. We do not fix infeasible coalitions by rewriting the remaining text, because that would  measure deletion plus rewriting rather than unit value .

Appendix~\ref{app:compilation} lists the full rule table and a walk-through.


\subsection{From Skill Structure to Unit Value}
\label{sec:game}
The compiled structure tells us which pieces of a skill may coexist, but not how valuable any piece is. We assign value by an insertion thought experiment: place the units in a valid order, add them one at a time, and measure how much the agent's score changes when a given unit appears. The value of a unit is the average of this \emph{marginal gain} over valid partial-skill contexts.

Two choices are specific to skills. First, not every subset is a valid partial skill: removing a target, a definition, or part of a section can leave a broken  document. This yields feasible insertion orders and their distribution $\mu$.  Second, a valid subset is still only a set of units, not text. Rendering it requires a removal operator $\rho$, whose choice determines whether removed content also shortens the prompt.

\paragraph{Feasible orders and $\mu$.}
A unit is valued by the score change when it is inserted into a partial skill.  We therefore only use insertion orders whose every prefix is itself a valid  partial skill. Formally, a permutation of $\units$ is feasible if each prefix lies in $\feas$ and is contiguous under $\hier$. Fig.~\ref{fig:operator}(a) shows the coalition-level condition behind this definition: each prefix must be downward-closed in $\dep$, so a unit is never evaluated without the units it  needs.

Many feasible orders may remain. Different samplers weight different partial skills, so we make this choice by indexing the value by a distribution $\mu$ over feasible orders. We do not claim a canonical uniform value: the layerwise sampler used in practice is generally not uniform over linear extensions, and uniform sampling is computationally hard.
Appendix~\ref{app:sampler} gives the sampler and its support properties.

\paragraph{Rendering coalitions with $\rho$.}
An insertion order tells us which coalitions to score, but a coalition $S\subseteq\units$ is not yet a prompt. A removal operator $\rho$ renders $S$ as a skill by keeping the units in $S$ and handling the missing units in $\units\setminus S$.

We require $\rho$ to be local: it may change only removed units, while surviving units remain byte-for-byte unchanged. This makes the measured effect a deletion
effect, not deletion plus rewriting. Length is the second issue. Removing a unit removes information and also shortens the prompt, so we use two local operators. $\rhodel$ deletes missing units outright; $\rhopad$ replaces them with length-matched neutral placeholders. Fig.~\ref{fig:operator}(b) illustrates the two renderings on the same feasible coalition.

\paragraph{Unit value.}
 Given a feasible-order distribution $\mu$ and a rendering operator $\rho$, the value of an execution unit $i$ is its expected marginal contribution. This is the random-order form of the Shapley marginal, with two changes: orders are restricted to feasible skill orders, and the uniform distribution is replaced   by the declared distribution $\mu$~\citep{ghorbani2019datashapley,chi2025pcwinter}:
  \begin{equation}
  \label{eq:value}
  \phi_{i,\rho}(\mu)=\E_{\pi\sim\mu}\!\left[
    \val_\rho\!\big(\prefixset{i}{\pi}\cup\{i\}\big)-
    \val_\rho\!\big(\prefixset{i}{\pi}\big)\right],
  \end{equation}
  where $\prefixset{i}{\pi}$ is the set of units preceding $i$ in $\pi$.
  The coalition value is the average verified rollout score after the agent is
  given the rendered skill $\rho(S)$:
  \begin{equation}
  \label{eq:coalition-value}
  \val_\rho(S)=\frac{1}{M}\sum_{t=1}^{M}
  \score\!\left(\agentop(\rho(S),t)\right),
  \end{equation}
  where $M$ is the number of held-out tasks, $\agentop(\rho(S),t)$ is the agent's
  rollout on task $t$ using the rendered skill $\rho(S)$, and $\score$ is the
  programmatic task verifier. For stochastic agents, $\score$ denotes the rollout
  average used by the evaluator. 
 
  With this definition, the two renderings give complementary views of the same
  unit. $\rhodel$ measures the net effect of removing the unit from the prompt,
  while $\rhopad$ measures its marginal contribution with the document footprint
  held fixed. We therefore report: 
  \begin{align*}
  \text{content value} \;&=\; \phi_{i,\rhopad}, \\
  \text{context cost}  \;&=\; \phi_{i,\rhopad}-\phi_{i,\rhodel}, \\
  \text{net effect}    \;&=\; \phi_{i,\rhodel}.
  \end{align*}
  A unit with positive content value but large context cost is a compression target rather than a deletion target. At the grand coalition the two renderings
   agree, $\val_{\rhodel}(\units)=\val_{\rhopad}(\units)$, which gives a runtime check on the renderer. That is, the context-cost term is interpreted under the placeholder-neutrality assumption.
  
  The frontmatter unit $m$ is treated separately because it triggers the skill
  rather than executing one part of it. We define its trigger value as
  \[
  \theta_m=\val_{\rhodel}(\{m\})-\val_{\rhodel}(\varnothing).
  \]
  The deletion operator is used here because $\rhopad(\varnothing)$ would be a
  document of placeholders rather than a bare agent. Since every execution unit
  depends on $m$, every feasible order begins with $m$; all other units are
  therefore valued conditional on the skill being triggered

  \begin{algorithm}[t]
  \caption{Chain-coupled task-window estimation.}
  \label{alg:estimator}
  \begin{algorithmic}[1]
  \REQUIRE compiled game $\game=(\units,\dep,\hier)$, stratified task panel,
    number of orders $K$, window size $b$, tolerance $\tau$
  \ENSURE trigger value $\hat\theta_m$ and unit values
    $\hat\phi_{i,\rhodel},\hat\phi_{i,\rhopad}$ for execution units $i\neq m$
  \STATE initialize running sums for $\hat\theta_m$ and all
    $\hat\phi_{i,\rho}$ to zero
  \FOR{$k=1$ \TO $K$}
    \STATE draw a feasible order $\pi_k$ respecting $\dep$ and $\hier$
    \STATE draw a stratified task window $B_k$ of size $b$, independent of $\pi_k$
    \STATE evaluate $\bar v_{\rhodel}(\varnothing,B_k)$ and
      $\bar v_{\rhodel}(\{m\},B_k)$
    \STATE add
      $\bar v_{\rhodel}(\{m\},B_k)-\bar v_{\rhodel}(\varnothing,B_k)$
      to the running sum for $\hat\theta_m$
    \STATE evaluate the full-skill anchor $\bar v(\units,B_k)$
      \COMMENT{same rendered skill for both operators}
    \FOR{$\rho\in\{\rhodel,\rhopad\}$}
      \STATE $S\leftarrow\{m\}$
      \STATE evaluate $\mathrm{prev}\leftarrow \bar v_\rho(S,B_k)$
        \COMMENT{reuse the value above when $\rho=\rhodel$}
      \FOR{each execution unit $i$ after $m$ in $\pi_k$}
        \STATE $S\leftarrow S\cup\{i\}$
        \STATE evaluate $\mathrm{cur}\leftarrow \bar v_\rho(S,B_k)$
        \STATE add $\mathrm{cur}-\mathrm{prev}$ to the running sum for
          $\hat\phi_{i,\rho}$
        \STATE $\mathrm{prev}\leftarrow\mathrm{cur}$ \\
        \COMMENT{truncate only when the tolerance holds for both operators}
        \IF{$|\mathrm{cur}-\bar v(\units,B_k)|\le\tau$}
          \STATE stop this chain for operator $\rho$
          \STATE \textbf{break}
        \ENDIF
      \ENDFOR
    \ENDFOR
  \ENDFOR
  \STATE divide every running sum by $K$
  \RETURN $\hat\theta_m$, $\hat\phi_{i,\rhodel}$, $\hat\phi_{i,\rhopad}$
  \end{algorithmic}
  \end{algorithm}

\subsection{Estimate Unit Values under a Rollout Budget}
\label{sec:estimation}
Addressing \emph{budgeted rollout estimation}, this section estimates the value in Eq.~\ref{eq:value} under a fixed rollout budget.
The difficulty is that no coalition value is a lookup: obtaining one requires re-rendering the skill, running the frozen agent on held-out tasks, and verifying every outcome.
A full two-operator audit would require
\[
  R_{\mathrm{full}}
  =
  \underbrace{2}_{\text{operators}} \times 
  \underbrace{K}_{\text{orders}} \times 
  \underbrace{(n+1)}_{\text{prefix chain}} \times 
  \underbrace{M}_{\text{tasks}}
\]
agent rollouts, which is prohibitive for nontrivial skills.
For $n=50$ units, $M=40$ tasks and only $K=10$ orders, already $2\times10\times51\times40=40{,}800$ agent rollouts.
This makes naive estimation impractical.

Two observations make this reduction possible. First, a sampled feasible order
gives a whole prefix chain. For an order $\pi_k$, let $S_j$ be the set of the
first $j$ units in that order, so
\[
\varnothing=S_0\subset S_1\subset\cdots\subset S_n=\units .
\]
Each adjacent pair gives the marginal contribution of the unit added at that
step. Thus the $n+1$ coalition evaluations along one order yield one marginal
sample for every unit, rather than only for a single unit.

Second, the task panel can be windowed only if the window is shared along the
chain. We draw a stratified task window $B_k$ of size $b\ll M$ for each sampled
order and evaluate every prefix of that order on the same $B_k$. Let
$\bar v_\rho(S,B_k)$ denote the average verified rollout score of coalition
$S$ on the task window $B_k$ under rendering operator $\rho$:
\[
\bar v_\rho(S,B_k)=\frac{1}{b}\sum_{t\in B_k}
\score\!\left(\agentop(\rho(S),t)\right).
\]
The marginal at step $j$ is then the paired difference
\[
\bar v_\rho(S_j,B_k)-\bar v_\rho(S_{j-1},B_k).
\]
Using the same tasks on both sides cancels much of the task-level difficulty
variation; using different windows would mix the unit effect with differences
between tasks. This pairing makes small windows usable, so the saved budget can
be spent on sampling more feasible orders.

 We call this mechanism \emph{chain-coupled task windows}. The resulting
  estimator over $K$ sampled feasible orders is
  \begin{equation}
  \label{eq:estimator}
  \hat\phi_{i,\rho}
  =
  \frac{1}{K}\sum_{k=1}^{K}
  \left[
  \bar v_\rho\big(\prefixset{i}{\pi_k}\cup\{i\},B_k\big)
  -
  \bar v_\rho\big(\prefixset{i}{\pi_k},B_k\big)
  \right].
  \end{equation}
  Because each window is drawn independently of the sampled order and gives each
  task equal inclusion probability, Eq.~\ref{eq:estimator} targets the
  panel-level value in Eq.~\ref{eq:value}. The implementation enforces pairing by
  requiring the two sides of every marginal difference to use the identical task
  list. 

  Windowing reduces the task factor, but a long chain can still spend rollouts on
  prefixes whose score has already matched the full skill. We therefore use
  noise-gated truncation~\cite{ghorbani2019datashapley,kwon2022beta}. Each chain first evaluates the anchor
  $\bar v_\rho(\units,B_k)$. While scanning prefixes, if
  \[
  \left|\bar v_\rho(S_j,B_k)-\bar v_\rho(\units,B_k)\right|\le \tau,
  \]
  the remaining units on that chain receive zero marginal contribution. The
  threshold $\tau$ controls truncation bias by bounding the total residual
  contribution left in the truncated suffix on that window. Since the test uses a
  noisy window mean, $b$ and $\tau$ are calibrated jointly. The trigger unit $m$
  is minimal in every feasible order, so its trigger value is evaluated before
  truncation can occur.


 Algorithm~\ref{alg:estimator} also gives the actual cost. Per sampled order,
  we evaluate four anchors: $\bar v_{\rhodel}(\varnothing,B_k)$,
  $\bar v_{\rhodel}(\{m\},B_k)$, $\bar v_{\rhopad}(\{m\},B_k)$, and the
  full-skill anchor $\bar v(\units,B_k)$ shared by both operators. The remaining
  cost is two truncated prefix walks, one for $\rhodel$ and one for $\rhopad$.
  Since the full-skill value is already cached as the anchor, the last prefix need
  not be evaluated again. If $\gamma\in(0,1]$ is the average fraction of
  intermediate prefixes that survive the truncation gate, the rollout cost is
  \[
  R
  =
  K\,b\,\big[4+2\gamma(n-2)\big],
  \qquad
  \frac{R}{R_{\mathrm{full}}}
  \approx
  \gamma\,\frac{b}{M}.
  \]
  For $n=50$, $M=40$, $K=10$, $b=8$, and a measured
  $\gamma\approx0.2$, this gives a more than tenfold reduction. 
  
Appendix~\ref{app:estimator} gives the unbiasedness and variance analysis.

\section{Experiments}
\label{sec:experiments}

We evaluate whether \textsc{SkillSV} produces values that are faithful to the defined game, useful for editing skills, and explanatory of optimized skill structure. The experiments answer three questions:
\emph{(i) Faithfulness:} does \textsc{SkillSV} recover known unit values and
preserve the aggregate value of real skills?
\emph{(ii) Actionability:} can the values guide pruning and compression without
meaningful held-out performance loss?
\emph{(iii) Explanation:} what attribution patterns in optimized skills make
such compression possible?

\subsection{Experimental Setup}
  
\textbf{Dataset.} 
We evaluate \textsc{SkillSV} on four agentic benchmarks covering different task types and interaction budgets: {LiveMath}~\citep{he2026livemath},
{OfficeQA}~\citep{opsahlong2026officeqa}, {SpreadsheetBench(SSB, in short)}~\citep{ma2024spreadsheetbench}, and {ALFWorld}~\citep{shridhar2021alfworld}. The details of the four 
benchmarks are summarized in Table~\ref{tab:anchors}.  
For each benchmark, we evaluate the best-performing \texttt{skill.md} generated by four automatic skill optimizers: Trace2Skill~\citep{trace2skill},
TextGrad~\citep{yuksekgonul2024textgrad}, GEPA~\citep{agrawal2026gepa}, and SkillOpt~\citep{skillopt}. 
Note that, \textsc{SkillSV} is optimizer-agnostic: it values only the compiled skill document, independent of how the skill was produced.

\begin{table}[tb!]
\centering
\caption{Benchmark statistics and score anchors (in \%). $V(\varnothing)$:
target model with no skill; $V(\{m\})$: metadata/trigger unit only;
$V(N)$: full skill. Total lift $=V(N)-V(\varnothing)$, brackets report the 95\% bootstrap
  confidence interval (CI); content lift $=V(N)-V(\{m\})$.}
\label{tab:anchors}
\small
\setlength{\tabcolsep}{8pt}
\renewcommand{\arraystretch}{1.25}
\begin{tabular}{@{}lcccccc@{}}
\toprule
\textbf{Benchmark} & \makecell{Max\\turns} & $V(\varnothing)$ & $V(\{m\})$ & $V(N)$ & \makecell{Total\\lift} & \makecell{Content\\lift} \\
\midrule
LiveMath    &  1 & 48.3 & 48.3 & 65.0 & \gap{+16.7}{+3.3}{+30.0}   & $+16.7$ \\
OfficeQA    & 24 & 51.2 & 53.5 & 69.8 & \gap{+18.6}{+10.5}{+27.9}  & $+16.3$ \\
Spreadsheet & 30 & 27.5 & 28.7 & 78.7 & \gap{+51.2}{+40.0}{+62.5}  & $+50.0$ \\
ALFWorld    & 50 & 88.3 & 86.7 & 96.7 & \gap{+8.3}{+1.7}{+16.7}    & $+10.0$ \\
\bottomrule
\end{tabular}
\end{table}

\noindent \textbf{Baselines.}
We compare \textsc{SkillSV} with three baselines that rank the same compiled 
units in Sec.~\ref{sec:method}: Closure-LOO~\cite{cook1977detection}, an LLM judge~\cite{zheng2023judging}, 
and random ranking. Closure-LOO ablates each unit with its dependency closure, 
while the LLM judge with GPT-5.5 scores units independently. 
All methods use the same held-out pruning protocol.

\noindent \textbf{Implementation details.}
We compile skills into units and evaluate coalitions with a frozen target agent
(GPT-5.5) under each benchmark's native metric. unit values are estimated with
 deletion ($\rhodel$) and length-neutral padding ($\rhopad$), which separate
content contribution from context-occupancy cost. We use a budgeted estimator
($K{=}12$, $b{=}8$, $\tau{=}0.05$) and a 1000-replicate task-level bootstrap.
Full experimental details are given in  Appendix~\ref{app:setting}.

\begin{table}[tb!]
\centering
\caption{Value-closure check on four benchmarks. $\Lambda=V(N)-V(\{m\})$ is the content lift (95\% CIs in brackets); \cmark/\xmark\ marks whether the estimator's CI covers $\Lambda$. \textsc{SkillSV} totals recover $\Lambda$, whereas LOO is often miscalibrated. Note that, the inconsistency between  $V(N)$ here and in Table~\ref{tab:anchors}, because we evaluated it on the disjoint panel of held outs for fair comparison.}
\label{tab:value-closure}
\footnotesize
\setlength{\tabcolsep}{4pt}
\renewcommand{\arraystretch}{1.2}
\fitwidth{%
\begin{tabular}{@{}l cc ccc c cc cc@{}}
\toprule
& \multicolumn{2}{c}{\textbf{Size}} & \multicolumn{3}{c}{\textbf{Coalition value}}
& \textbf{Lift} & \multicolumn{2}{c}{\textbf{\textsc{SkillSV}}} & \multicolumn{2}{c}{\textbf{Closure-LOO}}\\
\cmidrule(lr){2-3}\cmidrule(lr){4-6}\cmidrule(lr){7-7}\cmidrule(lr){8-9}\cmidrule(lr){10-11}
\textbf{Benchmark} & Units & Tasks & $V(\varnothing)$ & $V(\{m\})$ & $V(N)$
& $\Lambda$ & $\Sigma\phi$ & $\tfrac{\Sigma\phi}{\Lambda}$
& $\Sigma\mathrm{LOO}$ & $\tfrac{\Sigma\mathrm{LOO}}{\Lambda}$\\
\midrule
LiveMath & 16 & 60 & 0.483 & 0.483 & 0.633
& \gap{+0.150}{+0.033}{+0.283} & \gap{+0.146}{+0.073}{+0.229} & $0.97\times$~\cmark
& \gap{-0.125}{-1.250}{+0.875} & $-0.83\times$~\xmark\\
OfficeQA & 25 & 86 & 0.488 & 0.535 & 0.733
& \gap{+0.198}{+0.116}{+0.291} & \gap{+0.188}{+0.135}{+0.240} & $0.95\times$~\cmark
& \gap{+0.875}{+0.250}{+1.625} & $4.43\times$~\xmark\\
Spreadsheet & 49 & 80 & 0.275 & 0.275 & 0.775
& \gap{+0.500}{+0.388}{+0.613} & \gap{+0.479}{+0.396}{+0.573} & $0.96\times$~\cmark
& \gap{-0.500}{-1.500}{+0.500} & $-1.00\times$~\cmark\\
ALFWorld & 48 & 60 & 0.883 & 0.900 & 0.950
& \gap{+0.050}{+0.000}{+0.117} & \gap{+0.052}{+0.021}{+0.083} & $1.04\times$~\cmark
& \gap{-0.500}{-1.125}{+0.000} & $-10.00\times$~\xmark\\
\bottomrule
\end{tabular}}
\end{table}

\subsection{\textsc{SkillSV} Measures the Right Value}

\noindent\textbf{Recovering planted and real interactions.}
We test faithfulness in two settings. First, we use synthetic skills where the
  ground-truth interaction pattern is known: a redundant OR pair, a complementary
  AND pair, a harmful unit, and a dead unit. Second, we use real optimized skills of Spreadsheet and check whether the estimated values agree with direct interaction tests,
  such as removing one unit versus removing both units in a redundant pair.

  Fig.~\ref{fig:mechanism}(a) shows that \textsc{SkillSV} recovers the planted
  roles across all synthetic cases. The baselines fail predictably: Closure-LOO assigns zero to the redundant pair and double-counts the complementary pair, the LLM judge is inconsistent, and random provides no meaningful signal.

  This effect extends to real skills (Fig.~\ref{fig:mechanism}(b)). For a matched redundant pair, removing either unit alone barely affects the score, whereas removing both causes a clear drop.  \textsc{SkillSV} correctly recover this. 
  The interaction value is real, yet entirely missed by single-removal evaluations.

\begin{figure}[tb!]
\centering
  \begin{minipage}[t]{0.535\textwidth}
    \centering
    \includegraphics[width=\linewidth]{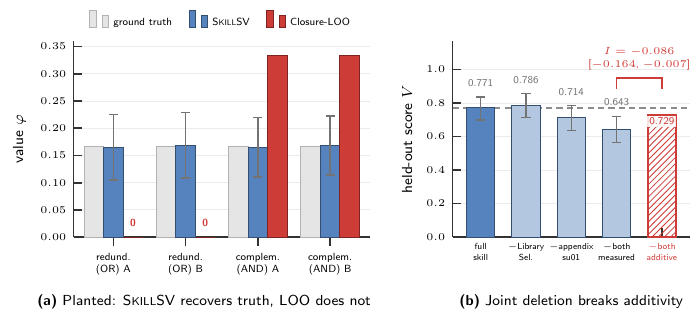}
    \caption{\textsc{SkillSV} recovers planted OR/AND unit values.}
    \label{fig:mechanism}
  \end{minipage}\hfill
  \begin{minipage}[t]{0.445\textwidth}
    \centering
    \includegraphics[width=\linewidth]{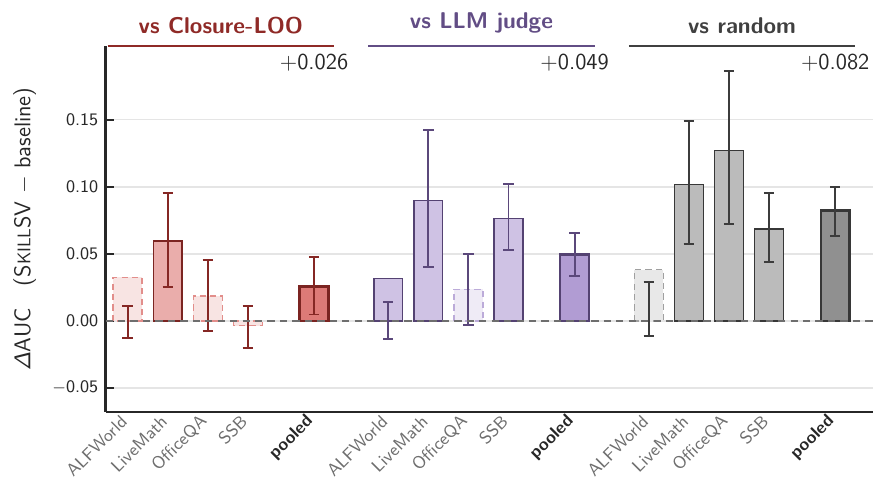}
    \caption{AUC summary of the pruning curves.}
    \label{fig:prune_auc}
  \end{minipage}
\end{figure}

\noindent \textbf{Recovering total value.}
A second faithfulness check asks whether the unit scores add up to the observed
  gain from the skill content. For each real skill, we first measure three anchor
  values: the bare agent $V(\varnothing)$, the trigger-only skill $V(\{m\})$, and
  the full skill $V(N)$. The content lift is
  \[
  \Lambda = V(N)-V(\{m\}),
  \]
  the improvement attributable to execution units beyond the frontmatter. If an
  attribution method assigns calibrated unit values, then summing those unit
  values should recover $\Lambda$.

  Table~\ref{tab:value-closure} shows that \textsc{SkillSV} passes this closure
  check. Across all four benchmarks, $\sum_i \phi_i$ closely matches the measured
  content lift, with ratios from $0.95\times$ to $1.04\times$. This is the \emph{empirical counterpart of Shapley efficiency}: the budgeted estimator preserves
  the aggregate value of the skill content. Closure-LOO fails the same check, often with the wrong sign or totals several times larger than $\Lambda$,  because single-deletion effects are not additive in the presence of redundancy and complementarity.

\subsection{SkillSV Enables Safe Compression}

\noindent\textbf{Efficacy of Safe Compression.}
To directly evaluate the decision-making utility of unit valuations, we conduct a pruning test: units are sequentially removed in ascending order of their estimated values, and performance is tracked on a held-out panel. An ideal valuation accurately isolates redundant content, preserving performance during early pruning steps and yielding a larger Area Under the Curve (AUC).

As shown in Fig.~\ref{fig:prune_auc}, results indicate that \textsc{SkillSV} most robustly guides ``safe compression.'' In the pooled evaluation across all four benchmarks, \textsc{SkillSV} achieves a significantly higher AUC than all baselines (+0.026 vs. LOO, +0.049 vs. LLM judge, +0.082 vs. random, with no 95\% CIs containing zero), showing that the low-value content it identifies can indeed be safely discarded.

\begin{figure}[tb!]
\centering
\includegraphics[width=\textwidth]{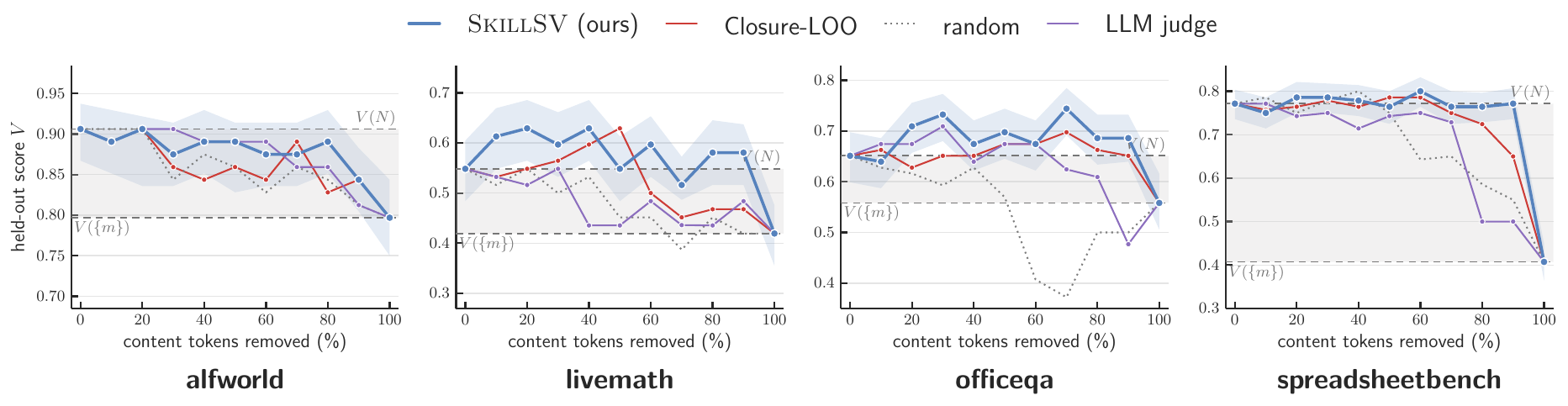}%
\caption{Held-out pruning curves comparing \textsc{SkillSV} with Closure-LOO, an LLM judge, and random unit deletion.}
\label{fig:prune_curve}
\end{figure}

\noindent\textbf{Robustness at High Pruning Ratios.}
Moreover, Fig.~\ref{fig:prune_curve} reports the full held-out pruning curves, where content units are removed from lowest to highest estimated value and each intermediate skill is re-scored. 

An ideal valuation should maintain the full-skill score $V(N)$ during early pruning, dropping toward the minimal-skill floor $V(\{m\})$ only after all expendable content is removed. Across all four benchmarks, \textsc{SkillSV} sustains peak performance at significantly higher pruning ratios than Closure-LOO, the LLM judge, or random ranking (which degrades rapidly and even collapses below $V(\{m\})$ on OfficeQA). 
This indicates that \textsc{SkillSV} ranks units by their true removable contribution rather than by isolated leave-one-out effects, enabling more aggressive compression before performance is affected.

\noindent\textbf{Lossless Skill Compression.}  Hard deletion tests whether a valuation can rank units for removal, but editing
  a skill in practice is less binary: a low-value unit may be deleted, merged with
  a redundant unit, or rewritten more compactly. We therefore run a single
  attribution-guided refinement step. The editor receives the original skill and
  the \textsc{SkillSV} report computed on panel A, including each unit's content
  value, context cost, and net effect. It is instructed to preserve high-value
  content, remove harmful or near-zero units, and compress units whose content
  value is positive but whose context cost is large.  

  We then evaluate the revised skill once on the disjoint panel B and compare it
  with the original skill on the same tasks. Table~\ref{tab:refine-prune} shows
  that the revised skills retain only 69\% of the original tokens on average, yet no significant performance change on all 4 benchmarks.
   Thus, \textsc{SkillSV} effectively facilitates the lossless compression of converged skills.  

\begin{table}[tb!]
\centering
\caption{Safe compression using \textsc{SkillSV}-only refinement.}
\label{tab:refine-prune}
\small
\setlength{\tabcolsep}{9pt}
\renewcommand{\arraystretch}{1.25}
\begin{tabular}{@{}lrrrrl@{}}
\toprule
\textbf{Benchmark} & $M$ & Tok. & $V_{\text{old}}$ & $V_{\text{new}}$ & $\Delta$ {\scriptsize[95\% CI]} \\
\midrule
LiveMath    &  62 & 96\% & 61.3 & 56.5 & $-4.8$ {\scriptsize$[-14.5,+4.8]$} \\
OfficeQA    &  86 & 80\% & 64.0 & 65.1 & $+1.2$ {\scriptsize$[-7.0,+9.3]$}  \\
Spreadsheet & 140 & 57\% & 73.6 & 73.6 & $+0.0$ {\scriptsize$[-5.0,+5.0]$}  \\
ALFWorld    &  64 & 42\% & 89.1 & 87.5 & $-1.6$ {\scriptsize$[-9.4,+4.7]$}  \\
\cmidrule(l){2-6}
\emph{mean} &     & \emph{69\%} & & & \emph{$-1.3$} \\
\bottomrule
\end{tabular}
\end{table}

\subsection{Why Compression Works}
The compression results above are possible because optimized skill files contain
many units whose measured contribution is small. 
We next give a deep analysis.

First, within that content, value is far from uniform. 
From Fig.~\ref{fig:concentration}(a), the top 10\% of units account for
21\% of the mass on OfficeQA, 35\% on LiveMath, 60\% on Spreadsheet, and
100\% on ALFWorld.
 This concentration explains why low-value pruning can remove
substantial text before performance changes.

Second, 
Fig.~\ref{fig:concentration}(b) illustrates the same point at unit level on
Spreadsheet. \textsc{SkillSV} separates a small number of influential
units from values near the estimator's noise floor. By contrast, Closure-LOO collapses estimates to zero because redundant units mask each other in the full context. True sparsity emerges only through multi-context evaluation.

\begin{figure}[tb!]
\centering
\includegraphics[width=0.7\textwidth]{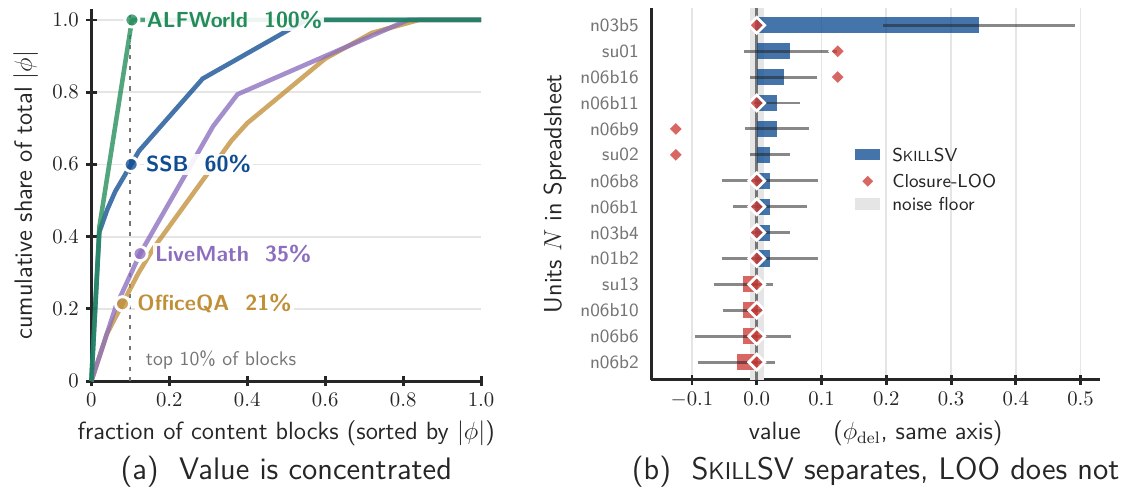}%
\caption{Concentration and resolution of skill values.}
\label{fig:concentration}
\end{figure}

\section{Conclusion}
\label{sec:conclusion}

We introduced \textsc{SkillSV}, a framework for attributing the performance of
  an agent skill to the units inside the skill artifact. The central point is that
  skill valuation is not flat ablation: counterfactual skills must respect
  dependencies and hierarchy, deletion must be separated from prompt-length
  effects, and estimates must be obtained under a tight rollout budget.
  \textsc{SkillSV} addresses these requirements by compiling skills into
  structured games, valuing units over feasible insertion orders, and estimating
  paired deletion/padding counterfactuals with chain-coupled task windows.

  Empirically, \textsc{SkillSV} recovers interaction-sensitive unit values,
  preserves aggregate skill lift, and supports safe pruning and compression across
  four agentic benchmarks. These results suggest that optimized skills often
  contain redundant or low-value content that can be identified without treating
  the skill as an opaque prompt. More broadly, structure-aware valuation provides
  a practical diagnostic layer for future skill optimization systems: not only
  whether a skill works, but which parts work, which parts are costly, and which
  parts can be safely changed.

\bibliographystyle{unsrtnat}
\bibliography{references}  

\clearpage
\appendix

\section{Extended Related Work}
  \label{app:related}

The main related-work section introduces the concepts needed to read the method.
Here we expand on two points: how \textsc{SkillSV} relates to cooperative games
with restricted cooperation, and how it differs from prior Shapley-based
valuation in machine learning.

\subsection{Cooperative Games with Restricted Cooperation}
\label{app:related-coop}
 Classical Shapley valuation averages marginal contributions uniformly over all
  player permutations, as in Eq.~\ref{eq:shapley}. Games with restricted
  cooperation retain this marginal-contribution perspective but constrain which
  coalitions or joining orders are admissible, or change how admissible orders are
  weighted.

Precedence-constrained games restrict admissible orders to linear
  extensions of a partial order~\citep{faigle1992shapley}. Consequently, every
  prefix is downward closed: a player can enter only after its prerequisites.
  This corresponds to the role of the dependency DAG $\dep$ in
  \textsc{SkillSV}. Winter's levels-structure value instead imposes a nested
  organization on players and requires members of the same hierarchical block to remain contiguous in an admissible order~\citep{winter1989value}. This
  corresponds to the role of the document hierarchy $\hier$.

  \paragraph{Relation to PC-Winter.}
   PC-Winter~\citep{chi2025pcwinter} combines precedence and hierarchy constraints
  derived from a graph computation structure, and values nodes over the resulting
  admissible orders. \textsc{SkillSV} follows the same broad perspective but
  differs in three ways. First, $\dep$ and $\hier$ are independently compiled from
  skill dependencies and document structure, so their compatibility is not
  guaranteed. Second, a coalition must be rendered into an executable skill by
  $\rho$, making its payoff $V_\rho$ renderer-dependent. Third, because our
  sampler need not be uniform over feasible orders, we explicitly index the value
  by its induced distribution $\mu$, following the probabilistic-value
  perspective~\citep{weber1988probabilistic}.

  Thus, \textsc{SkillSV} is not simply PC-Winter applied to a different data
  modality. It adopts the general idea of restricting marginal-contribution
  orders, while adding independently compiled dependency and hierarchy
  structures, renderer-dependent counterfactuals, and an explicit feasible-order
  distribution. In the unconstrained special case with uniform $\mu$, the
  conditional execution-unit game reduces to classical Shapley valuation.

\subsection{Shapley-Based Valuation in Machine Learning}
\label{app:related-ml}
  Shapley-based and related cooperative-game methods have been used to value
  training data, model components, and prompt or context fragments. Many of
  these methods use a flat player set. Grouped and hierarchical variants also
  exist, but their structure typically changes the attribution granularity or
  aggregation rule rather than declaring a text coalition structurally invalid.
  \textsc{SkillSV} uses structure for the latter purpose: dependencies determine
  admissible coalitions, and hierarchy restricts admissible joining orders.

  \paragraph{Data and model-component valuation.}
  Data Shapley assigns credit to training examples through their marginal effects
  on model utility~\citep{ghorbani2019datashapley}. Beta Shapley modifies the
  coalition weighting to improve robustness and estimation
  ~\citep{kwon2022beta}, while Data Banzhaf provides a related
  cooperative-game alternative~\citep{wang2023databanzhaf}. The same perspective
  has been applied to model components, including neurons
  ~\citep{ghorbani2020neuron}, parameters~\citep{xu2026model}, and
  mixture-of-experts experts~\citep{huang2026discovering}. These methods capture
  redundancy and complementarity, but their interventions remain defined without
  introducing dependency-closed coalitions or hierarchy-constrained orders.

  \paragraph{Prompt and context attribution.}
  Text-based attribution is closer to skill valuation. Prior work values prompt
  segments or input attributes~\citep{liu2023promptvaluation,
  mohammadi2024explaining}, tokens~\citep{horovicz2024tokenshap,
  xiao2025tokenshapley}, context sentences~\citep{cohenwang2024contextcite},
  in-context demonstrations~\citep{xie2024demoshapley}, and source documents
  ~\citep{ye2025clustershapley}. These methods generally construct coalitions by
  deleting, masking, or selecting text fragments. However, they do not explicitly
  model a dependency relation under which removing one fragment can invalidate
  another, or a document hierarchy that constrains the contexts in which a
  fragment should be evaluated.

  Skills make these constraints consequential. A surviving unit may reference a
  removed script, use a deleted definition, or become detached from its structural
  lead-in. \textsc{SkillSV} therefore evaluates only dependency-closed coalitions
  and hierarchy-compatible orders. Its local renderer additionally leaves
  surviving units unchanged, avoiding counterfactuals that combine deletion with
  uncontrolled rewriting.

  \paragraph{Removal effects and agentic evaluation.}
  Deleting prompt text can change both its information content and its context
  footprint. Prior attribution methods typically include both effects in a single
  intervention. \textsc{SkillSV} instead pairs deletion with length-matched
  padding. Under the placeholder-neutrality assumption stated in
  Sec.~\ref{sec:game}, the two interventions support separate estimates of
  content value, context cost, and net effect. This interpretation is therefore
  conditional on the placeholder acting only through its context footprint.

  The utility oracle also differs. Context-attribution methods typically explain
  a prediction, output probability, or generated response, whereas skill
  valuation measures verified performance over a held-out task distribution after
  multi-step agent execution. Each coalition may therefore require multiple
  stochastic, tool-using rollouts. Our estimator builds on permutation sampling
  and truncation developed for Data Shapley
  ~\citep{ghorbani2019datashapley,maleki2013bounding}, but couples adjacent
  marginals through shared task windows to reduce variance from the agentic
  oracle.

  In summary, prompt attribution provides the closest textual intervention
  setting, while restricted-cooperation games provide the closest structural
  formalism. \textsc{SkillSV} connects the two by valuing editable text units only
  through counterfactuals that remain valid, executable skills.

\section{Notation}
\label{app:notation}
  Table~\ref{tab:notation} summarizes the notation in order of its use: skill
  compilation, structural counterfactuals, unit valuation, and budgeted
  estimation.

  \begin{table}[tb!]
  \centering
  \caption{Summary of notation.}
  \label{tab:notation}
  \small
  \renewcommand{\arraystretch}{1.2}
  \begin{tabular}{@{}l p{0.80\linewidth}@{}}
  \toprule
  \textbf{Symbol} & \textbf{Meaning} \\
  \midrule

  \multicolumn{2}{@{}l}{\emph{Skill structure and compilation}}\\
  $\comp$
    & Deterministic compiler mapping a skill artifact to $\game$ \\
  $\game=(\units,\dep,\hier)$
    & Compiled skill structure \\
  $\units,\ n$
    & Set of valuation units, including the trigger unit; $n=|\units|$ \\
  $\dep$
    & Dependency DAG; $u\to v$ means that unit $u$ requires unit $v$ \\
  $\hier$
    & Document hierarchy over files, sections, and valuation units \\
  $m$
    & Frontmatter/trigger unit required by every execution unit \\
  $i$
    & An execution unit, $i\in\units\setminus\{m\}$ \\

  \addlinespace
  \multicolumn{2}{@{}l}{\emph{Coalitions and counterfactuals}}\\
  $S$
    & Coalition of retained units, $S\subseteq\units$ \\
  $\feas$
    & Dependency-closed coalitions:
      $\{S\subseteq\units: u\in S,\ u\to v\in\dep \Rightarrow v\in S\}$ \\
  $\rho$
    & Local counterfactual rendering operator mapping $S$ to a skill artifact \\
  $\rhodel$
    & Deletes all absent units $\units\setminus S$ \\
  $\rhopad$
    & Replaces absent units with length-matched neutral placeholders \\
  $\pi$
    & Feasible permutation whose prefixes lie in $\feas$ and respect
      $\hier$-contiguity \\
  $\Pi(\units)$
    & Set of all permutations of $\units$ \\
  $\Pi_{\mathrm{feas}}(\dep,\hier)$
    & Set of feasible permutations under $\dep$ and $\hier$ \\
  $\mu$
    & Declared distribution over feasible permutations \\
  $\prefixset{i}{\pi}$
    & Units preceding $i$ in permutation $\pi$ \\

  \addlinespace
  \multicolumn{2}{@{}l}{\emph{Tasks, value, and valuation}}\\
  $\tasks,\ t$
    & Target task distribution and a task $t\sim\tasks$ \\
  $T_A,T_B$
    & Disjoint attribution and held-out confirmation panels, respectively \\
  $M_A,M_B$
    & Panel sizes, $M_A=|T_A|$ and $M_B=|T_B|$ \\
  $\agentop$
    & Frozen target agent \\
  $\score$
    & Programmatic benchmark-native task verifier \\
  $v_\rho(S,t)$
    & Per-task score:
      $\score\!\left(\agentop(\rho(S),t)\right)$ \\
  $\val_\rho(S)$
    & Coalition value under $\rho$, obtained by averaging
      $v_\rho(S,t)$ over the attribution panel $T_A$ \\
  $\val_{\rhodel}(\varnothing)$
    & Bare-agent value with no skill \\
  $\phi_{i,\rho}(\mu)$
    & Expected marginal contribution of execution unit $i$ under
      renderer $\rho$ and feasible-order distribution $\mu$ \\
  $\theta_m$
    & Trigger value:
      $\val_{\rhodel}(\{m\})-\val_{\rhodel}(\varnothing)$ \\
  $\Lambda$
    & Content lift:
      $\val_{\rhodel}(\units)-\val_{\rhodel}(\{m\})$ \\

  \addlinespace
  \multicolumn{2}{@{}l}{\emph{Budgeted estimator}}\\
  $K$
    & Number of sampled feasible permutations \\
  $W_k,\ b$
    & Stratified task window for permutation $k$ and its size,
      $W_k\subseteq T_A$ and $|W_k|=b$ \\
  $\bar v_\rho(S,W_k)$
    & Mean score of coalition $S$ on window $W_k$ under $\rho$ \\
  $\hat\phi_{i,\rho}$
    & Budgeted estimate of $\phi_{i,\rho}(\mu)$ \\
  $\hat\theta_m$
    & Estimated trigger value \\
  $\tau$
    & Truncation tolerance relative to the full-skill anchor \\
  $\gamma$
    & Average fraction of intermediate prefixes evaluated before truncation \\
  $R,\ R_{\mathrm{full}}$
    & Rollout cost of the budgeted and full estimators, respectively \\
  \bottomrule
  \end{tabular}
  \end{table}


\section{Compiler Specification}
  \label{app:compilation}

  The compiler $\comp$ maps a skill artifact to
  $\game=(\units,\dep,\hier)$ through four deterministic passes:
  unitization, dependency extraction, graph normalization, and
  dependency--hierarchy reconciliation. Each output unit records its source
  span, and each dependency edge records its rule identifier, source and target
  units, and triggering evidence. The compiled structure can therefore be
  audited against the original artifact.

\begin{table}[tb!]
  \centering
  \caption{Dependency rules enabled in the compiler.}
  \label{tab:rules}
  \small
  \begin{tabular}{@{}llp{0.44\textwidth}p{0.25\textwidth}@{}}
  \toprule
  Rule & Name & Condition & Emitted edge and evidence \\
  \midrule
  R1 & \textsc{link} &
  Unit $u$ contains a Markdown link resolving to unit or section $x$. &
  $u\to\operatorname{primary}(x)$; link and resolved target. \\

  R2 & \textsc{path} &
  Unit $u$ contains a path naming resource unit $r$. &
  $u\to r$; matched path. \\

  R3 & \textsc{heading-ref}$^\dagger$ &
  Unit $u$ contains a verbatim heading mention of at least two words and eight
  characters. &
  $u\to\operatorname{primary}(x)$; matched heading. \\

  R4 & \textsc{def-use} &
  A supported code block in $u$ uses a symbol defined in unit $v$. &
  $u\to v$; symbol and definition/use locations. \\

  R5 & \textsc{example-ref} &
  A workflow unit $u$ explicitly invokes an example resource $r$ exposed by
  the skill manifest. &
  $u\to r$; matched example identifier and resource path. \\

  R6 & \textsc{trigger} &
  $i$ is an execution unit. &
  $i\to m$; structural convention. \\

  R7 & \textsc{table-cont} &
  A unit $u$ contains headerless table rows continuing a table introduced by
  unit $v$. &
  $u\to v$; row and header locations. \\

  R8 & \textsc{list-cont}$^\dagger$ &
  An unordered top-level item $u$ immediately refines the nearest preceding
  ordered item $v$ in the same section. &
  $u\to v$; the two list markers. \\

  R9 & \textsc{lead-in}$^\dagger$ &
  Unit $\ell$ ends in a colon and immediately introduces an uninterrupted run of
  units $u_1,\ldots,u_k$. &
  $u_j\to\ell$ for each $j$; lead-in and run spans. \\
  \bottomrule
  \end{tabular}

  \raggedright\footnotesize
  $^\dagger$ Deterministic surface heuristic. We ablate R3, R8, and R9 separately
  because their evidence is weaker than explicit links, paths, and code-level
  def--use relations. R5 is enabled but emits no edge in the reported experiments
  because none of the evaluated artifacts exposes a separate example resource.
  \end{table}

  \subsection{Unitization and Scaffolding}

  The frontmatter is compiled as the trigger unit $m$. In the Markdown body, each
  list marker at indentation zero opens a body unit. Its indented bullets, code
  blocks, tables, and continuation lines remain attached until the next
  indentation-zero item, heading, protected-region boundary, or end of file;
  blank lines alone do not terminate a unit. Each auxiliary file exposed by the
  skill manifest is compiled as a resource unit.

  A region marked by the producing optimizer as atomically rewritten is compiled
  as one protected unit. Any top-level content not covered by the preceding rules
  is also compiled as one protected unit and flagged in the compiler report.
  This fallback ensures that every non-scaffold source span belongs to exactly
  one unit.

  Headings and horizontal separators are scaffold rather than players. Under
  $\rhodel$, a scaffold span is rendered exactly when at least one unit in its
  subtree survives. Under $\rhopad$, an otherwise absent scaffold span is
  replaced by a length-matched neutral span. Thus scaffold rendering is a
  deterministic function of the coalition rather than an additional player.

  \subsection{Dependency Extraction}

  For every emitted edge $u\to v$, unit $u$ requires unit $v$. Table~\ref{tab:rules}
  lists the rules enabled in our experiments.

  \paragraph{Worked example.}
  Fig.~\ref{fig:skillgraph} illustrates one compiler trace. The instruction
  unit $n_4$ uses \texttt{OUTPUT\_PATH}, whose definition occurs in the resource
  unit $s$ corresponding to \texttt{template.py}. Rule R4 therefore emits the
  edge $n_4\to s$. Its audit record stores the rule identifier, symbol name, and
  definition and use locations. The remaining edges are recorded analogously
  under the rules in Table~\ref{tab:rules}.

 \subsection{Graph Normalization}

  Three conventions are applied after edge extraction. First, content-derived
  rules do not emit edges from $m$, preserving $m$ as the unique minimal trigger
  unit. Second, a reference to a section is resolved to its primary unit: the
  section's body unit when present, and otherwise its first descendant unit.
  This convention applies to R1 and R3.

  Third, every strongly connected component of $\dep$ is contracted into an
  atomic composite unit. Its member spans remain in their original document
  positions but are retained or removed jointly. The compiler records all
  members of each contraction because contraction reduces valuation resolution.
  The hierarchy is then recomputed before compatibility checking.

  \subsection{Dependency--Hierarchy Compatibility}
  \label{app:conflict}

  After cycle contraction, $\dep$ is a DAG, but it may still conflict with
  hierarchical contiguity. For each internal hierarchy node $h$, let its children
  form candidate blocks. Construct a quotient graph $Q_h$ over these blocks:
  if a unit under child $c$ depends on a unit under child $c'$, add the precedence
  arc $c'\to c$. Self-arcs are ignored. Assuming each child is internally
  feasible, its children can be ordered as contiguous blocks if and only if
  $Q_h$ is acyclic.

  The compiler checks hierarchy nodes bottom-up. At node $h$, it initially treats
  each child subtree as one local sampling block. Whenever $Q_h$ is cyclic, every
  child block on a deterministic witnessing cycle is expanded into its repaired
  child blocks, and the quotient graph is rebuilt. Expansion relaxes contiguity
  only at the conflicting boundary: active descendant blocks remain intact, and
  ancestor constraints are unchanged. The process produces a set of
  repaired sampling blocks $C_h^*$ whose quotient graph is acyclic. Every
  expansion and its witnessing cycle are recorded in the compiler report.

  This procedure terminates because each expansion replaces a non-leaf block by
  blocks strictly lower in the finite hierarchy. In the limiting case,
  $C_h^*$ consists of individual units; its quotient graph is then a subgraph of
  the acyclic dependency graph. The repaired hierarchy therefore admits at least
  one feasible order while retaining every contiguity constraint that was not
  implicated in a conflict.

\section{The Feasible-Order Sampler}
  \label{app:sampler}

Section~\ref{sec:game} defines unit value with respect to a distribution $\mu$
  over feasible insertion orders. This appendix specifies the sampler inducing
  the experimental $\mu$ and establishes its soundness, full support, and
  nonuniformity. Throughout this section, $\hier$ denotes the repaired hierarchy
  returned by the compatibility pass in Appendix~\ref{app:conflict}.

  \subsection{Feasible Orders}

  Write an order as $\pi=(\pi_1,\ldots,\pi_n)$ and let
  \[
  S_j^\pi=\{\pi_1,\ldots,\pi_j\}
  \]
  be its $j$th prefix. We say that $\pi$ \emph{respects $\hier$} if the units in
  every active hierarchy block appear consecutively. A feasible order must both
  respect $\hier$ and have only dependency-closed prefixes. Thus,
  \begin{equation}
  \label{eq:feasible-orders}
  \Pi_{\mathrm{feas}}(\dep,\hier)
  \coloneqq
  \left\{
  \pi\in\Pi(\units)\ \middle|\
  \substack{
  S_j^\pi\in\feas\quad (j=1,\ldots,n),\\
  \pi\text{ respects }\hier
  }
  \right\}.
  \end{equation}

  The two conditions have distinct roles. Prefix feasibility ensures that a unit
  never appears before a unit it requires, while hierarchy compatibility prevents
  units from different document blocks from being arbitrarily interleaved.
  Because every execution unit has the dependency $i\to m$, every feasible order
  begins with the trigger unit $m$, consistent with the trigger/execution
  decomposition in Sec.~\ref{sec:game}.

  \subsection{Recursive Block Sampler}

  At an internal hierarchy node $h$, let $C_h^*$ be the repaired sampling blocks
  returned by the compatibility pass. We construct a block graph $Q_h$ over
  $C_h^*$: if some unit below block $c$ depends on a unit below block $c'$, we add
  the precedence arc $c'\to c$. Thus every edge points from a prerequisite block
  to a block that requires it. The compatibility pass guarantees that every such
  block graph is acyclic.

  \begin{algorithm}[t]
  \caption{\textsc{SampleOrder}$(h)$}
  \label{alg:sampler}
  \begin{algorithmic}[1]
  \REQUIRE repaired hierarchy node $h$, dependency DAG $\dep$
  \ENSURE an order of all units in $L(h)$
  \IF{$h$ is a unit}
    \RETURN $[h]$
  \ENDIF
  \STATE $R\leftarrow C_h^*$, \quad $\pi\leftarrow[\,]$
  \STATE build the block graph $Q_h$ over $C_h^*$
  \WHILE{$R\neq\varnothing$}
    \STATE $A\leftarrow
      \{c\in R:\nexists\,c'\in R\text{ with }c'\to c\text{ in }Q_h\}$
    \STATE draw $c$ uniformly from $A$
    \STATE $\pi\leftarrow\pi\mathbin\Vert\textsc{SampleOrder}(c)$
    \STATE $R\leftarrow R\setminus\{c\}$
  \ENDWHILE
  \RETURN $\pi$
  \end{algorithmic}
  \end{algorithm}

  The eligible set $A$ contains exactly the remaining sampling blocks whose
  prerequisite blocks have already been emitted. Since $Q_h$ is acyclic, $A$ is
  non-empty at every iteration. The order supplied to
  Algorithm~\ref{alg:estimator} is
  $\textsc{SampleOrder}(\operatorname{root}(\hier))$.

    \subsection{Sampler Correctness}

  \begin{proposition}[Validity and full support]
  \label{prop:sampler-correctness}
  Assume that the repaired block graph at every hierarchy node is acyclic after
  the compatibility pass. If $\mu$ is induced by Algorithm~\ref{alg:sampler}, then
  \[
  \operatorname{supp}(\mu)
  =
  \Pi_{\mathrm{feas}}(\dep,\hier).
  \]
  \end{proposition}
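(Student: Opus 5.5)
We prove the two inclusions separately, both by structural induction on the repaired hierarchy $\hier$, applied to the subtree $L(h)$ below each node $h$. It helps to strengthen the statement to a local version: for every node $h$, the support of \textsc{SampleOrder}$(h)$ equals the set of orders of $L(h)$ that (a) respect the repaired hierarchy restricted to the subtree of $h$, and (b) place every unit after all of its prerequisites that lie in $L(h)$. At the root, $L(\operatorname{root})=\units$, and (b) is equivalent to every prefix being dependency closed. This is because a prefix $S_j^\pi$ fails to be in $\feas$ exactly when some $u\in S_j^\pi$ has a prerequisite $v\notin S_j^\pi$, which means $v$ appears after $u$. So the local statement at the root is the proposition.

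\emph{Soundness ($\subseteq$).} The base case, $h$ a unit, is trivial. For an internal node $h$, the output is a concatenation $\textsc{SampleOrder}(c_1)\Vert\cdots\Vert\textsc{SampleOrder}(c_r)$ over the blocks of $C_h^*$.

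Each block's units are consecutive, and by induction each sub-order respects the hierarchy inside its block. Hence the whole order respects $\hier$, since the active blocks of the repaired hierarchy are exactly these blocks and their descendants.

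For (b), take a dependency $u\to v$ with $u,v\in L(h)$. If $u$ and $v$ lie in the same block, the induction hypothesis applies. Otherwise, with $u\in c$ and $v\in c'$, there is an arc $c'\to c$ in $Q_h$. Block $c$ can only be drawn from $A$ once $c'$ has been removed from $R$, so $v$ precedes $u$. Acyclicity of $Q_h$ guarantees that $A\neq\varnothing$ at every iteration, so the sampler always terminates with a full order.

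\emph{Full support ($\supseteq$).} Let $\pi$ satisfy (a) and (b) on $L(h)$. By (a), $\pi$ is a concatenation of the blocks of $C_h^*$ in some sequence $c_1,\dots,c_r$, with each restriction $\pi|_{c_j}$ contiguous. By (b), this block sequence is a topological order of $Q_h$: an arc $c'\to c$ is witnessed by some $u\in c$, $v\in c'$ with $u\to v$, so $v$ precedes $u$, and by contiguity all of $c'$ precedes all of $c$. Each restriction $\pi|_{c_j}$ satisfies (a) and (b) within $c_j$, so by induction it has positive probability under \textsc{SampleOrder}$(c_j)$.

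It remains to check that the sampler can produce the block sequence $c_1,\dots,c_r$. At step $j$, every $Q_h$-predecessor of $c_j$ is among $c_1,\dots,c_{j-1}$, so $c_j\in A$. It is then drawn with probability $1/|A|>0$. The recursive calls are made independently, so the probability of $\pi$ is a product of positive factors, and $\pi\in\operatorname{supp}(\mu)$.

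\emph{Main obstacle.} The delicate step is pinning down ``respects $\hier$'' relative to the \emph{repaired} hierarchy. We need the active blocks at each node $h$ to be precisely $C_h^*$, with expanded blocks no longer required to be contiguous. With that reading, the concatenation structure in (a) matches the sampler exactly.

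A second point to verify is that arcs in $Q_h$ are defined over all units below a block, including resource units and composite SCC units. Only then does the witness argument cover cross-block dependencies at every depth.

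Finally, at the root, every execution unit requires $m$. Hence $m$ forms (or lies in) a source block, which is consistent with, but not needed for, the argument.
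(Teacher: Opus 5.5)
Your proof is correct and takes the same route as the paper's. Soundness comes from the cross-block arcs of $Q_h$, with within-block dependencies handled recursively; the paper phrases this via the lowest hierarchy node separating $u$ and $v$. Full support comes from showing that each next block of a feasible order lies in the eligible set $A$, so the order has positive probability. Your structural induction on a strengthened local statement just makes explicit the recursion that the paper's proof leaves implicit.
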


  \begin{proof}
  We first show that every sampled order is feasible. The recursive call emits
  all units under a selected sampling block before selecting another block at the
  same repaired boundary, so the resulting order respects $\hier$. For any
  dependency $u\to v$,
  consider the lowest hierarchy node placing $u$ and $v$ in different sampling
  blocks. Its block graph contains an arc from the block of $v$ to the block of
  $u$. The sampler therefore emits the block containing $v$ before the block
  containing $u$. Dependencies internal to a block are handled recursively.
  Hence every prerequisite precedes the unit requiring it, and every prefix lies
  in $\feas$.

  Conversely, let $\pi\in\Pi_{\mathrm{feas}}(\dep,\hier)$. Because $\pi$ respects
  $\hier$, it induces an ordering of the repaired sampling blocks at every
  internal node. Whenever the next block $c$ in this ordering is considered,
  $c$ cannot depend on an unplaced block $c'$; otherwise $\pi$ would place a
  dependent block before one of its prerequisites. Thus $c$ belongs to the
  sampler's eligible set.
  Every eligible block is selected with positive probability, so the finite
  sequence of choices producing $\pi$ also has positive probability.
  \end{proof}

  The first inclusion guarantees that Algorithm~\ref{alg:estimator} evaluates
  only valid partial skills. The reverse inclusion shows that the sampler does
  not structurally exclude any feasible order. It does not imply that feasible
  orders receive similar probabilities or will all be observed under a finite
  budget; the induced nonuniform distribution is characterized next.

\subsection{Why $\mu$ Is Not Uniform}

  Algorithm~\ref{alg:sampler} samples uniformly from the locally eligible blocks,
  but this does not imply a uniform distribution over complete feasible orders.
  Consider a flat hierarchy over $\{x,y,z\}$ with the dependency $y\to x$, so
  $x$ must precede $y$. The feasible orders are
  \[
  xyz,\qquad xzy,\qquad zxy.
  \]
  Initially, the eligible units are $\{x,z\}$. Choosing $z$ forces the order
  $zxy$, whereas choosing $x$ leaves two possible continuations. Therefore,
  \[
  \mu(zxy)=\frac12,
  \qquad
  \mu(xyz)=\mu(xzy)=\frac14,
  \]
  instead of $1/3$ for every order. A globally uniform sampler would choose $x$
  with probability $2/3$, proportional to its two feasible continuations, rather
  than uniformly from the current eligible set.

  Thus, the sampler induces a specific nonuniform distribution $\mu$. Changing
  the sampler would change the value in Eq.~\ref{eq:value}, so $\mu$ is part of
  the estimand rather than merely an implementation detail.

\section{Estimator Analysis}
  \label{app:estimator}
  Once the value definition is fixed, the estimator is the Monte Carlo average
  of marginal gains over feasible orders. The design question is therefore not
  how to construct a new estimand, but how to estimate the declared value under
  an expensive rollout oracle without corrupting each marginal comparison. We
  address this by binding each sampled order to one shared task window and
  evaluating every prefix of that order on the same tasks.

  \begin{table}[tb!]
  \centering
  \caption{Benchmark statistics and evaluation protocols. Split sizes follow the
  released SkillOpt manifests. ``Max turns'' denotes the per-episode interaction
  limit specified by each environment.}
  \label{tab:datasets}
  \small
  \begin{tabular}{llrrrrl}
  \toprule
  Benchmark & Task type & Train & Val & Test & Max turns & Reported metrics \\
  \midrule
  LiveMath & math theorem MCQ & 35 & 18 & 124 & 1 & accuracy (EM) \\
  OfficeQA & doc-grounded agentic QA & 50 & 24 & 172 & 24 & EM / F1 \\
  SpreadsheetBench & spreadsheet manipulation & 80 & 40 & 280 & 30
    & cell pass (soft/hard) \\
  ALFWorld & embodied household tasks & 39 & 18 & 134 & 50
    & task success \\
  \bottomrule
  \end{tabular}
  \end{table}
\begin{table}[t]
  \centering
  \caption{Disjoint task panels allocated by one seeded shuffle without
  replacement. Panel~A is used for attribution and ranking; panel~B is used only
  for held-out confirmation. The listed sizes satisfy
  $|T_A|+|T_B|\le|\mathrm{test}|$.}
  \label{tab:panels}
  \footnotesize
  \begin{tabular}{lrrr}
  \toprule
  Benchmark & Test & Panel A & Panel B \\
  \midrule
  LiveMath & 124 & 60 & 62 \\
  OfficeQA & 172 & 86 & 86 \\
  SpreadsheetBench & 280 & 80 & 140 \\
  ALFWorld & 134 & 60 & 64 \\
  \bottomrule
  \end{tabular}
  \end{table}

  \subsection{One Order Gives a Chain of Execution Marginals}

  Every feasible order begins with the trigger unit $m$. Write its execution-unit
  suffix as $(i_1,\ldots,i_{n-1})$ and define
  \[
    S_0=\{m\},\qquad
    S_j=\{m,i_1,\ldots,i_j\},\qquad
    S_{n-1}=\units.
  \]
  The marginal of $i_j$ on the full attribution panel is
  \[
    \val_\rho(S_j)-\val_\rho(S_{j-1}).
  \]
  Averaging this quantity over feasible orders gives
  $\phi_{i_j,\rho}(\mu)$. Thus one sampled order yields one marginal sample for
  every execution unit. The trigger is not included in this chain: its value is
  estimated separately as
  $\theta_m=\val_{\rhodel}(\{m\})-\val_{\rhodel}(\varnothing)$, avoiding the
  semantically ambiguous padded empty artifact.

  \subsection{Replacing the Attribution Panel by a Window}

  Evaluating every prefix on all $M_A$ tasks in $T_A$ is expensive. For each
  sampled order $\pi_k$, we independently draw a stratified task window
  $W_k\subseteq T_A$ of size $b\ll M_A$. The window design is independent of
  $\pi_k$ and gives every task equal marginal inclusion probability. We estimate
  \[
    \bar v_\rho(S,W_k)
    =\frac{1}{b}\sum_{t\in W_k}v_\rho(S,t).
  \]
  Consequently, for every fixed coalition $S$,
  \[
    \E_{W_k}\!\left[\bar v_\rho(S,W_k)\right]
    =\frac{1}{M_A}\sum_{t\in T_A}v_\rho(S,t)
    =\val_\rho(S).
  \]
  Applying this identity to both sides of each marginal gives
  \[
    \hat\phi_{i,\rho}
    =\frac{1}{K}\sum_{k=1}^{K}
    \left[
      \bar v_\rho(\prefixset{i}{\pi_k}\cup\{i\},W_k)
      -\bar v_\rho(\prefixset{i}{\pi_k},W_k)
    \right],
  \]
  and, when truncation is disabled,
  \[
    \E[\hat\phi_{i,\rho}]=\phi_{i,\rho}(\mu).
  \]
  The same windows give the trigger estimator
  \[
    \hat\theta_m
    =\frac{1}{K}\sum_{k=1}^{K}
    \left[
      \bar v_{\rhodel}(\{m\},W_k)
      -\bar v_{\rhodel}(\varnothing,W_k)
    \right],
  \]
  which is unbiased for the panel-level trigger value.

  \subsection{Why Each Chain Uses One Shared Window}

  The two coalitions in a marginal must be evaluated on the same tasks. Otherwise
  their difference mixes the unit effect with differences in task difficulty.
  With a shared window $W$,
  \[
    \bar v_\rho(S\cup\{i\},W)-\bar v_\rho(S,W)
    =\frac{1}{b}\sum_{t\in W}
    \left[v_\rho(S\cup\{i\},t)-v_\rho(S,t)\right].
  \]
  Easy and difficult tasks therefore cancel inside the paired difference. If
  $s_d^2$ denotes the finite-panel variance of the task-level marginal in
  brackets, uniform sampling without replacement gives
  \[
    \operatorname{Var}_{W}\!\left[
      \bar v_\rho(S\cup\{i\},W)-\bar v_\rho(S,W)
    \right]
    =\left(1-\frac{b}{M_A}\right)\frac{s_d^2}{b}.
  \]
  Independent windows would instead retain the two level-score variances and
  lose their positive task-difficulty covariance.

  Sharing $W_k$ across the entire chain also preserves exact telescoping:
  \[
    \sum_{j=1}^{n-1}
    \left[
      \bar v_\rho(S_j,W_k)-\bar v_\rho(S_{j-1},W_k)
    \right]
    =\bar v_\rho(\units,W_k)-\bar v_\rho(\{m\},W_k).
  \]
  For deletion, adding the separately measured trigger difference gives the
  full-skill lift over the bare agent on every untruncated chain.

  \subsection{Noise-Gated Truncation}

  Before scanning a chain, we evaluate the full-skill anchor
  $\bar v_\rho(\units,W_k)$. If a prefix $S_j$ satisfies
  \[
    \left|
      \bar v_\rho(S_j,W_k)-\bar v_\rho(\units,W_k)
    \right|\le\tau,
  \]
  we stop the chain and assign zero marginal contribution to its remaining
  execution units. The omitted window-level marginals telescope to
  \[
    \sum_{\ell>j}
    \left[
      \bar v_\rho(S_\ell,W_k)-\bar v_\rho(S_{\ell-1},W_k)
    \right]
    =\bar v_\rho(\units,W_k)-\bar v_\rho(S_j,W_k),
  \]
  whose absolute value is at most $\tau$. Thus $\tau$ bounds the aggregate
  suffix discarded by one operator on one sampled window.

  This is an aggregate, window-level statement. Individual omitted marginals can
  be larger and cancel within the suffix, and the bound does not directly apply
  to the full panel. We therefore report truncation as a controlled approximation
  and compare it with an untruncated control run rather than claiming a per-unit
  error bound.

\section{Experimental Details and Extended Analyses}
  \label{app:setting}
  This appendix documents the protocol underlying the main-text experiments and
  provides additional analyses of estimator stability, decision support, and
  hierarchical attribution. The main text asks whether \textsc{SkillSV} is
  faithful and actionable. The analyses below test whether those conclusions can
  instead be explained by task sampling, rollout budget, or a particular way of
  aggregating unit values.

  \subsection{Experimental Protocol}

  \noindent\textbf{Benchmarks and splits.}
  We evaluate four agentic benchmarks spanning different verification regimes and
  interaction horizons. \textbf{LiveMath}~\citep{he2026livemath} consists of
  mathematician-level multiple-choice problems answered in a single turn.
  \textbf{OfficeQA}~\citep{opsahlong2026officeqa} evaluates document-grounded
  enterprise question answering with up to $24$ tool turns.
  \textbf{SpreadsheetBench}~\citep{ma2024spreadsheetbench} requires spreadsheet
  manipulation over trajectories of up to $30$ turns.
  \textbf{ALFWorld}~\citep{shridhar2021alfworld} evaluates embodied household
  task completion over up to $50$ interaction steps. This range lets us test
  whether block valuation remains useful as evaluation moves from a single
  verified answer to a long agent trajectory. Table~\ref{tab:datasets} summarizes
  the released splits and evaluation protocols.

  Each benchmark exposes a programmatic verifier. The coalition utility is the
  primary scalar field fixed in that benchmark's run configuration; additional
  verifier outputs are reported as secondary metrics and are not combined
  post hoc.

  \noindent\textbf{Skill selection.}
  For each benchmark, we run Trace2Skill~\citep{trace2skill},
  TextGrad~\citep{yuksekgonul2024textgrad}, GEPA~\citep{agrawal2026gepa}, and
  SkillOpt~\citep{skillopt}. We select the skill with the highest
  \emph{validation} score and freeze the selected artifact before constructing
  the test panels. Consequently, neither attribution scores nor held-out pruning
  results participate in skill selection. \textsc{SkillSV} receives only the
  resulting skill directory and does not inspect its optimizer or optimization
  history.

  \noindent\textbf{Target agent and scoring.}
  Every coalition is executed by the same frozen GPT-5.5 model snapshot, with
  identical decoding, tool, and interaction settings. Each rollout runs in an
  isolated sandbox and is scored by the benchmark's programmatic verifier.
  Therefore, differences between two coalition scores arise only from the
  rendered skill and rollout stochasticity, rather than model updates or
  cross-rollout state. The exact endpoint identifier, decoding parameters, retry
  policy, and environment configuration are fixed in the released run manifest.

  \noindent\textbf{Disjoint screening and confirmation panels.}
  We apply one seeded shuffle (seed~$42$) to each test split and allocate panels
  $T_A$ and $T_B$ without replacement; thus the panels are disjoint by
  construction. Panel~A supplies the task windows $W_k$ used to estimate unit
  values and induce rankings. Panel~B is never used to select the skill, tune estimator
  hyperparameters, rank units, or propose edits. It is reserved for reported
  held-out anchors, pruning curves, refinement evaluation, and audits. Panel
  sizes are fixed before inspecting coalition scores. Panel~B is
  evaluated in full at every held-out checkpoint, making its size the main
  determinant of pruning-curve uncertainty. Table~\ref{tab:panels} reports the
  pre-specified panel sizes.

   \begin{figure}[tb!]
  \centering
  \includegraphics[width=\textwidth]{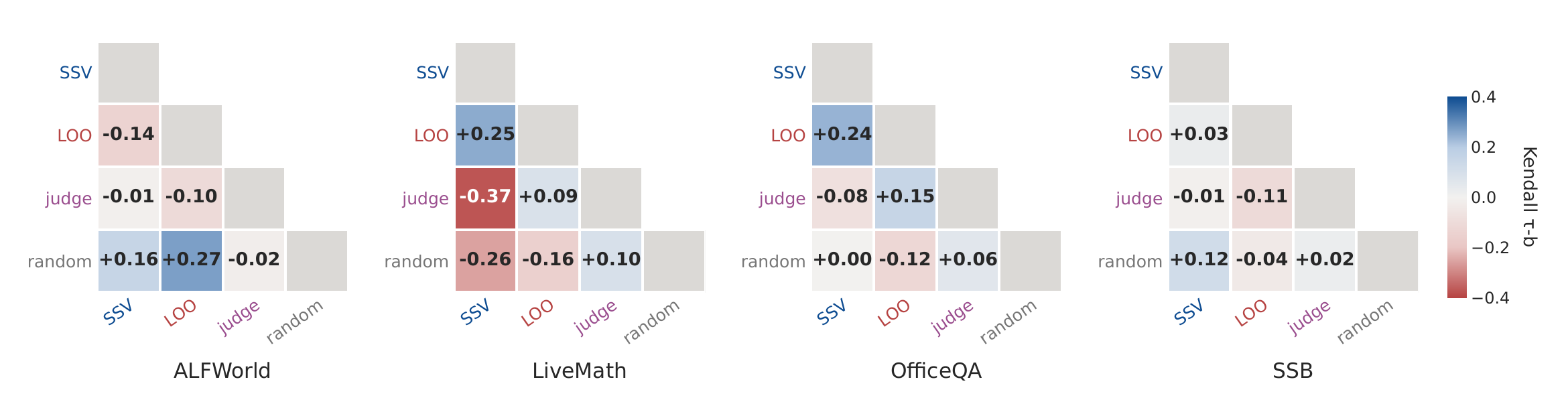}
  \caption{Pairwise agreement among attribution rankings. We report a tie-aware
  Kendall coefficient and assess significance against permutation null
  distributions that preserve each method's tie structure. Held-out pruning
  performance, rather than agreement itself, determines ranking utility.}
  \label{fig:rankagree}
  \end{figure}

  \noindent\textbf{\textsc{SkillSV} configuration.}
  We compile each selected skill at block granularity and draw $K=12$ feasible
  orders from the declared order distribution. Each order respects the dependency
  DAG and the repaired hierarchy. Independently for each order, we draw a
  stratified task window $W_k$ of $b=8$ tasks from $T_A$ and reuse it for every
  adjacent marginal on that prefix chain. The window draw is independent of the
  order and gives every task equal marginal inclusion probability.

  For a feasible coalition $S$, $\rhodel$ deletes units outside $S$, whereas
  $\rhopad$ replaces their spans with neutral placeholders matched to the target
  model's token length. Feasibility is enforced when coalitions and orders are
  constructed; the rendering operators do not independently expand a dependency
  closure. In contrast, the held-out pruning walks in the main text remove the
  reverse-dependency closure of a selected unit whenever this is necessary to
  keep the remaining coalition feasible.

  We use noise-gated truncation with $\tau=0.05$. A suffix is skipped only after
  the current prefix score is within $\tau$ of the full-skill anchor on the same
  task window. For OfficeQA, where both operators are evaluated, the suffix is
  truncated only when the gate holds for both operators. The tolerance bounds the
  aggregate omitted suffix on that sampled window; it does not bound every
  individual omitted marginal.

  We evaluate both $\rhodel$ and $\rhopad$ on OfficeQA, our two-operator audit,
  and use $\rhodel$ on the other benchmarks. Thus all four benchmarks support the
  net-value and pruning results, while conclusions about content value and
  context-occupancy cost are scoped to OfficeQA.

  \noindent\textbf{Uncertainty estimation.}
  Unit-value uncertainty is computed by resampling complete
  $(\pi_k,W_k)$ chains, preserving the dependence among all marginals obtained
  from the same order and task window. Held-out anchors and pruning curves instead
  use paired task-level bootstrap samples from panel~B. For comparisons between
  two pruning methods, the same bootstrap task sample is used to recompute both
  curves and their AUC difference. We use $1000$ bootstrap replicates throughout.

\begin{figure}[tb!]
  \centering
  \includegraphics[width=0.6\textwidth]{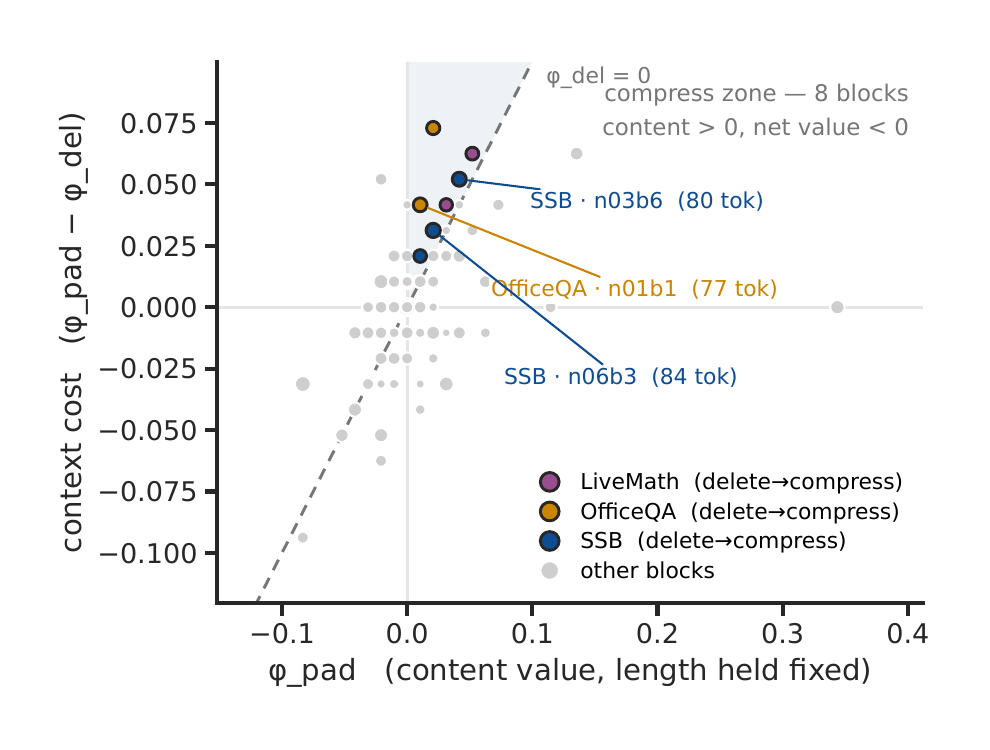}
  \caption{OfficeQA block-level decision map. The horizontal axis is content
  value $\phi_{i,\rhopad}$ and the vertical axis is context-occupancy cost
  $\phi_{i,\rhopad}-\phi_{i,\rhodel}$. The decomposition distinguishes content
  that should be retained from useful but overly costly content that should be
  compressed.}
  \label{fig:quadrant}
  \end{figure}

  \subsection{Estimator Calibration and Stability}

  \noindent\textbf{Selecting the task-window size.}
  Figure~\ref{fig:varb} evaluates the variance--budget trade-off underlying the
  choice $b=8$. For each candidate window size, we compare chain-coupled
  evaluation, in which both sides of every marginal use the same tasks, with an
  unpaired alternative using independently sampled tasks. Pairing removes the
  shared task-difficulty component and leaves only variation in task-level
  marginal effects. At the smallest windows, pairing reduces the measured
  variance to roughly one sixth of the unpaired alternative. At $b=M_A$, the two
  procedures coincide, providing an implementation consistency check. We choose
  $b=8$ as the smallest window that covers all required task strata while
  providing a stable truncation comparison.

  \begin{figure}[t]
  \centering
  \includegraphics[width=0.6\textwidth]{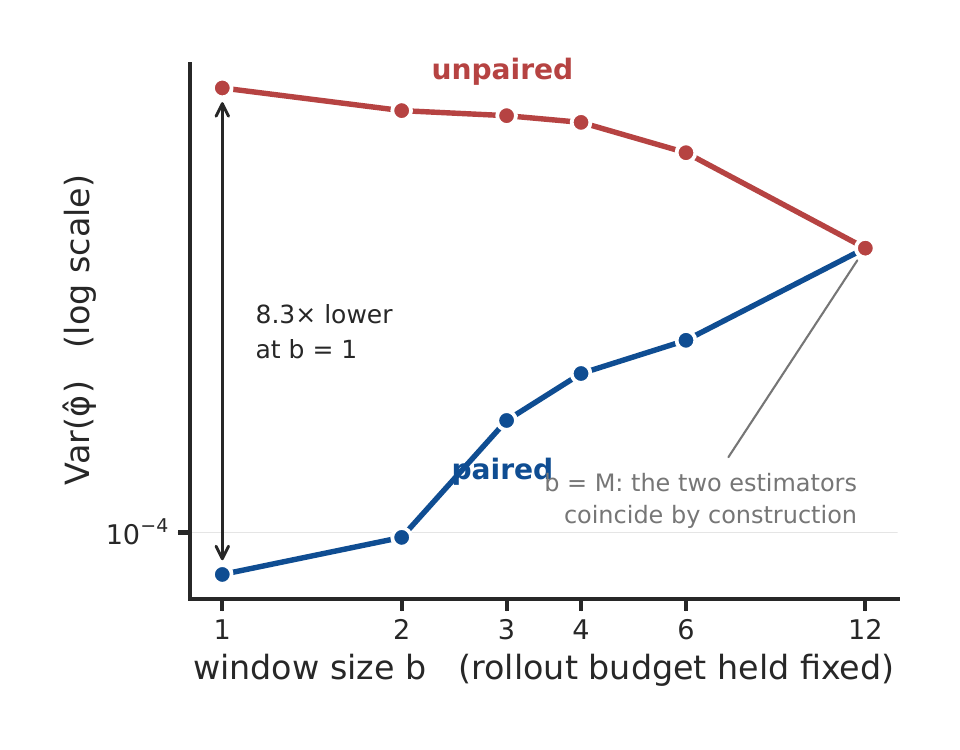}
  \caption{Variance--budget trade-off as task-window size $b$ varies.
  Chain-coupled evaluation uses the same task window on both sides of each
  marginal; the unpaired alternative samples them independently. The marked
  setting $b=8$ is used in all main experiments.}
  \label{fig:varb}
  \end{figure}

  \noindent\textbf{Calibrating truncation.}
  Window size and truncation tolerance cannot be selected independently: smaller
  windows make the saturation comparison noisier, whereas a larger tolerance can
  discard meaningful suffix effects. We calibrate $(b,\tau)$ on planted-value
  games and compare the selected $\tau=0.05$ configuration with an untruncated
  control run. In accordance with Appendix~\ref{app:estimator}, this audit
  evaluates aggregate and per-unit discrepancies separately; only the aggregate
  omitted suffix has a deterministic window-level bound.

  \subsection{Additional Attribution Diagnostics}

  \noindent\textbf{From values to keep--compress--delete decisions.}
  The pruning experiments in the main text use deletion values to test whether
  low-valued content can be removed. The two-operator OfficeQA audit provides a
  more refined decision view. Figure~\ref{fig:quadrant} places each block according
  to its content value $\phi_{i,\rhopad}$ and context-occupancy cost
  $\phi_{i,\rhopad}-\phi_{i,\rhodel}$. Positive-content blocks with small
  occupancy cost are natural retention targets, whereas positive-content blocks
  with substantial occupancy cost are candidates for compression rather than
  deletion. A block whose confidence interval includes zero on a decision axis,
  or whose magnitude lies below the estimator's reporting resolution, is treated
  as unresolved rather than assigned a categorical action. In this audit the
  occupancy effects are measurable but substantially narrower than the deletion
  values, so the decomposition is used as a targeted diagnostic rather than as a
  claim that context cost dominates these skills.

  \noindent\textbf{Relationship to baseline rankings.}
  Figure~\ref{fig:rankagree} compares the rankings induced by \textsc{SkillSV},
  Closure-LOO, the LLM judge, and random ordering, retaining ties rather than
  breaking them arbitrarily. We compare each observed coefficient with a
  permutation null that preserves the two rankings' tie structures and apply a
  Holm correction across benchmark--method comparisons. No method pair exceeds
  this corrected chance baseline. Low agreement does not by itself establish
  that one ranking is better; that evidence comes from the value-closure and
  held-out pruning experiments in the main text. Instead, this analysis shows
  that \textsc{SkillSV} is not merely a rescaled version of either learned or
  leave-one-out baselines.

  \noindent\textbf{Hierarchical localization of value.}
  Unit values can be aggregated through the hierarchy produced by the compiler.
  For an internal node, we sum the values of all descendant execution units;
  the node is not separately re-estimated. Figure~\ref{fig:valuetree} applies this
  aggregation to LiveMath. It complements the main-text concentration analysis by
  showing where the concentrated value resides in the document structure, rather
  than only how much value is captured by the top-ranked blocks.

  \begin{figure}[tb!]
  \centering
  \includegraphics[width=0.6\textwidth]{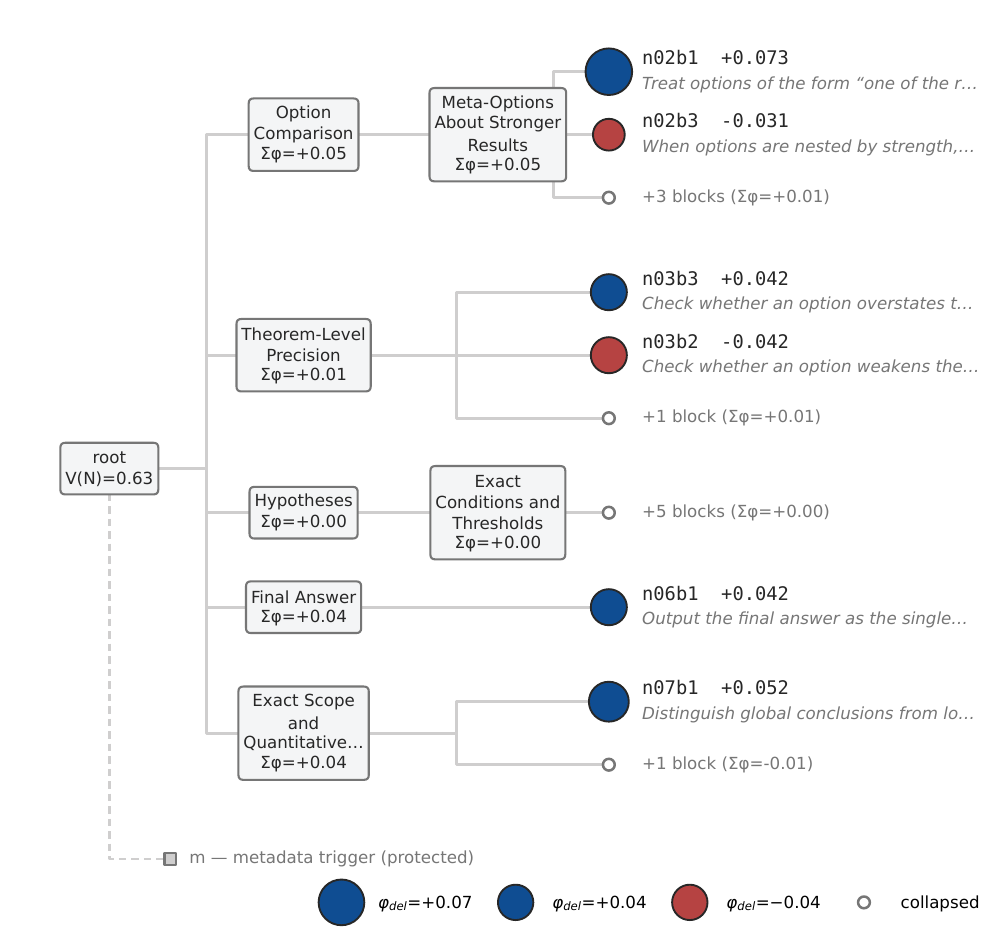}
  \caption{Hierarchical value tree for the LiveMath skill. Leaf nodes are
  compiled valuation units, and each internal-node value is the sum of its
  descendant unit values. The visualization localizes the value concentration
  reported in the main text to specific sections of the skill.}
  \label{fig:valuetree}
  \end{figure}

\end{document}